\documentclass[number]{ReportTemplate}
\usepackage[colorlinks,urlcolor=blue,linkcolor=blue,citecolor=blue]{hyperref}
\usepackage{utopia}
\usepackage{amssymb}
\usepackage{amsfonts}
\usepackage{amsmath}
\usepackage{mathtools}
\usepackage{amsthm}
\usepackage{bm}
\usepackage{graphicx}
\usepackage{algorithm}
\usepackage{algorithmic}
\usepackage[american]{babel}
\usepackage{subfigure}
\usepackage{hyperref}
\mathtoolsset{showonlyrefs,showmanualtags}
\usepackage{color}

\usepackage[margin=1.0in]{geometry}

\usepackage{enumerate}

\usepackage{pgfplots}
\usepackage{epsfig}

\usepackage{tcolorbox}
\usepackage{mdframed}
\usepackage{lipsum}

\renewcommand{\hat}{\widehat}

\usepackage{multirow}

\newtheorem{remark}{Remark}

\renewenvironment{myproof}[1]
{\par\noindent\textbf{Proof of #1.}\ \enspace\ignorespaces\begin{allowdisplaybreaks}}
{\end{allowdisplaybreaks}\hspace{\stretch{1}}$\square$}

\pgfplotsset{compat=1.18} 

\begin{document}
\begin{frontmatter}
\title{Migrant Resettlement\\ by Evolutionary Multi-objective Optimization}

\author{Dan-Xuan Liu\textsuperscript{\rm 1,2}}
\ead{liudx@lamda.nju.edu.cn}

\author{Yu-Ran Gu\textsuperscript{\rm 1,2}}
\ead{guyr@lamda.nju.edu.cn}

\author{Chao Qian\textsuperscript{\rm 1,2}*}
\ead{qianc@lamda.nju.edu.cn}
\cortext[cor1]{This work was supported by the National Science Foundation of China (62276124). Chao Qian is the corresponding author.}

\author{Xin Mu\textsuperscript{\rm 3}}
\ead{mux@pcl.ac.cn}

\author{Ke Tang\textsuperscript{\rm 4}}
\ead{tangk3@sustech.edu.cn}

\address{\textsuperscript{\rm 1}National Key Laboratory for Novel Software Technology, Nanjing University, Nanjing 210023, China\\ \vspace{0.5em}
\textsuperscript{\rm 2}School of Artificial Intelligence, Nanjing University, Nanjing 210023, China\\\vspace{0.5em}
\textsuperscript{\rm 3}Peng Cheng Laboratory, Shenzhen 518000, China\\\vspace{0.5em}
\textsuperscript{\rm 4}Shenzhen Key Laboratory of Computational Intelligence, Department of Computer Science and Engineering, \\Southern University of Science and Technology, Shenzhen 518055, China
}
%\maketitle
\begin{abstract}
Migration has been a universal phenomenon, which brings opportunities as well as challenges for global development. As the number of migrants (e.g., refugees) increases rapidly in recent years, a key challenge faced by each country is the problem of migrant resettlement. This problem has attracted scientific research attention, from the perspective of maximizing the employment rate. Previous works mainly formulated migrant resettlement as an approximately submodular optimization problem subject to multiple matroid constraints and employed the greedy algorithm, whose performance, however, may be limited due to its greedy nature. In this paper, we propose a new framework MR-EMO based on Evolutionary Multi-objective Optimization, which reformulates Migrant Resettlement as a bi-objective optimization problem that maximizes the expected number of employed migrants and minimizes the number of dispatched migrants simultaneously, and employs a Multi-Objective Evolutionary Algorithm (MOEA) to solve the bi-objective problem. We implement MR-EMO using three MOEAs, the popular NSGA-II, MOEA/D as well as the theoretically grounded GSEMO. To further improve the performance of MR-EMO, we propose a specific MOEA, called GSEMO-SR, using matrix-swap mutation and repair mechanism, which has a better ability to search for feasible solutions. We prove that MR-EMO using either GSEMO or GSEMO-SR can achieve better theoretical guarantees than the previous greedy algorithm. Experimental results under the interview and coordination migration models clearly show the superiority of MR-EMO (with either NSGA-II, MOEA/D, GSEMO or GSEMO-SR) over previous algorithms, and that using GSEMO-SR leads to the best performance of MR-EMO.
\end{abstract}

\begin{keyword}
Migrant resettlement \sep approximately submodular optimization\sep matroid constraints\sep evolutionary multi-objective optimization\sep multi-objective evolutionary algorithms
\end{keyword}

\end{frontmatter}

\newpage
\section{Introduction}

As an important part of globalization, the phenomenon of migration has become an unstoppable trend. In 2019, the number of international migrants was 272 million, exceeding predictions for the year 2050~\cite{migration-report20}. Migration is beneficial to the sustainable development of both host and home countries, as well as migrants themselves. For host countries, massive migrants can fill the manpower shortage~\cite{host-country1}, and entrepreneurs among them can even create new employment opportunities~\cite{host-country3}. What behind the higher employment rates is the prosperity and development of all walks of life~\cite{host-country2}. Meanwhile, migrants can earn more salaries, and bring back more foreign exchange and trade for home countries~\cite{home-country1}.

Migration also brings challenges, one of which is migrant resettlement. For example, there were 82.4 million refugees in the world in 2020, more than double the level in 2010~\cite{UNHCR2019}. Because refugees may lack language skills and have difficulties in adaptation, it is full of challenges and obstacles for policymakers to make them find jobs in the host countries~\cite{marbach2018long}, without developed systematic and pellucid resettlement procedures~\cite{delacretaz2016refugee}, which consequently have attracted scientific research attention.

A paper published in \textit{Science} first tried to select a subset of migrant-locality pairs maximizing the expected number of employed migrants, through a data-driven algorithm combining supervised machine learning with optimal matching~\cite{bansak2018improving}. Utilizing historical registry data, supervised learning models are built to predict the employment probability of each migrant at each locality. By assuming that the employment of different migrants at the same locality is independent, the algorithm can lead to a $40\%$-$70\%$ migrants' employment gain relative to the actual outcomes.

The above additive algorithm~\cite{bansak2018improving} ignores the effect of competition in the real-world labor market. That is, whether one migrant gets a job or not is also influenced by other migrants' employment. Thus, G{\"o}lz and Procaccia~\cite{golz2019migration} introduced two migration models, i.e., interview and coordination models, to simulate the competition among migrants. For these models, the objective function, i.e., the expected number of migrants who find employment, all satisfies the $\epsilon$-approximately submodular property, where $\epsilon \geq 0$. Furthermore, the constraints that each migrant can only go to one locality to apply for a job and each locality can accommodate a limited number of migrants, can be characterized by the intersection of $k$ matroids, where $k\geq 2$. Thus, migrant resettlement is cast as the general problem of $\epsilon$-approximately submodular maximization subject to $k$ matroid constraints. G{\"o}lz and Procaccia~\cite{golz2019migration} proposed the greedy algorithm, which achieves an approximation guarantee of $1/(k+1+\frac{4\epsilon r}{1-\epsilon})$, where $r$ is the size of the largest feasible subset of migrant-locality pairs. They empirically showed that the greedy algorithm can bring improvement on both models, compared with the additive algorithm~\cite{bansak2018improving}.

\begin{figure*}[t!]\centering
\begin{minipage}[c]{0.36\linewidth}\centering
        \includegraphics[width=0.8\linewidth]{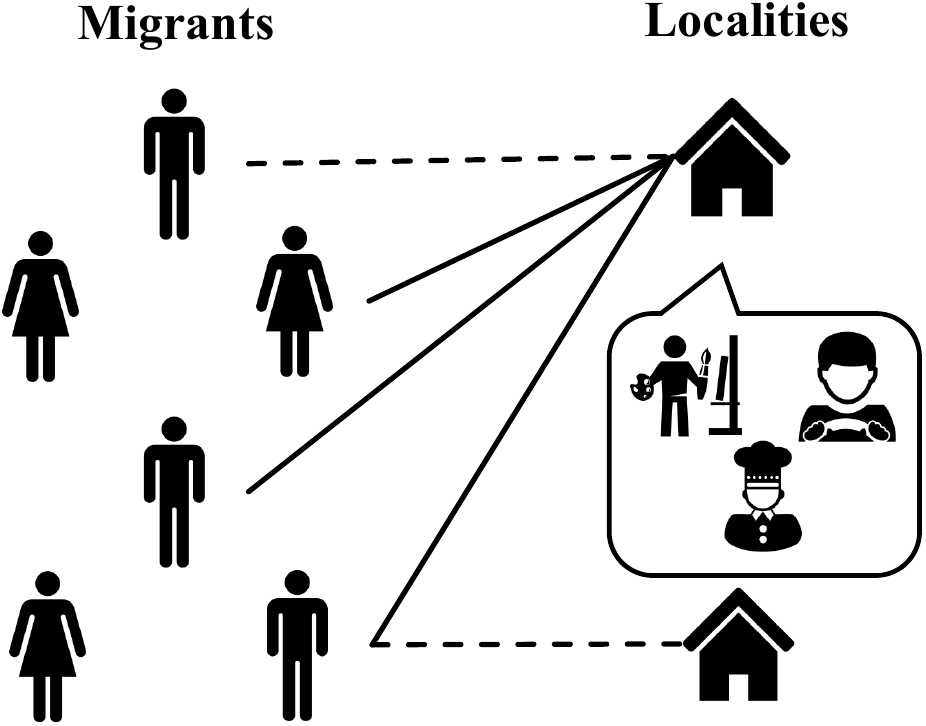}
\end{minipage}
\begin{minipage}[c]{0.44\linewidth}\centering
        \includegraphics[width=0.9\linewidth]{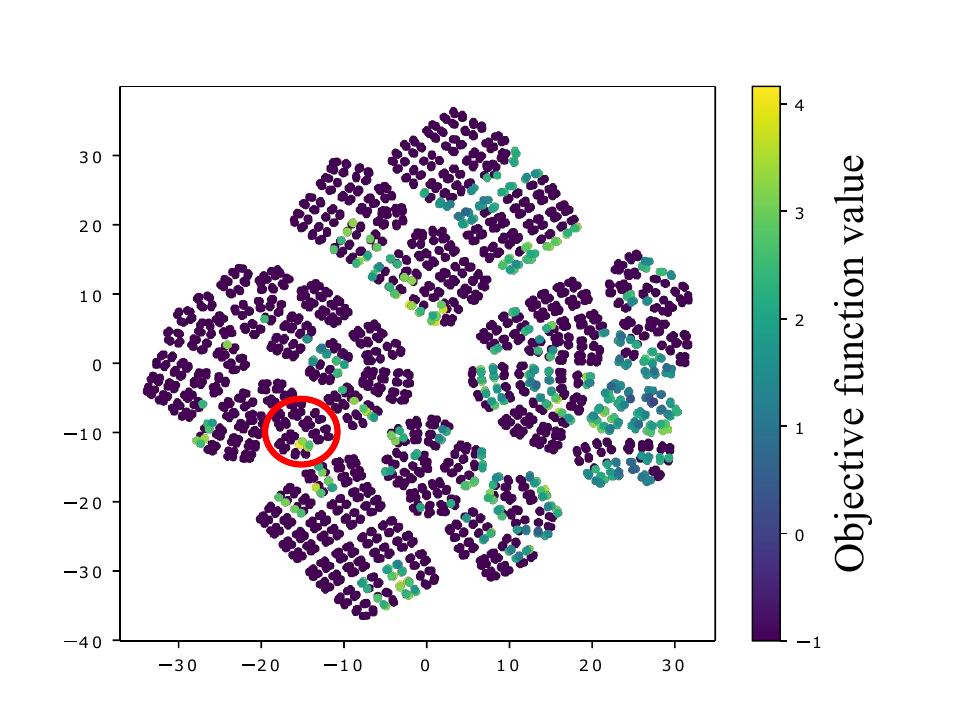}
\end{minipage}\\
\begin{minipage}[c]{0.36\linewidth}\centering
    \small(a) 
\end{minipage}
\begin{minipage}[c]{0.44\linewidth}\centering
    \small(b) 
\end{minipage}
\caption{(a) An illustration of a migrant resettlement problem instance, where each edge (migrant-locality pair) denotes that the migrant is assigned to the locality. (b) The heat map of a migrant resettlement problem instance (where six migrants go to two localities to find jobs under the interview migration model) in the solution space. The color of a point corresponds to the objective function value, i.e., the expected number of migrants who find employment, of a solution; a point with the darkest color (i.e., the value $-1$) implies an infeasible solution.}\label{heatmap}
\end{figure*}

Considering that the performance of the greedy algorithm may be limited due to its greedy nature, in this paper, we propose a new framework based on Evolutionary Multi-objective Optimization~\cite{knowles2001reducing,ecj15submodular,qian2015subset} for Migrant Resettlement, briefly called MR-EMO, which reformulates migrant resettlement as a bi-objective optimization problem that maximizes the $\epsilon$-approximately submodular objective value of a feasible subset of migrant-locality pairs and minimizes the subset size simultaneously. That is, MR-EMO tries to maximize the expected number of employed migrants while dispatching as few migrants as possible. MR-EMO can be equipped with any MOEA to solve this bi-objective problem, and we employ the popular NSGA-II~\cite{nsgaii}, MOEA/D~\cite{moead} as well as the theoretically grounded GSEMO~\cite{Laumanns04}. Empirical results on the interview and coordination migration models show that using either NSGA-II, MOEA/D or GSEMO, MR-EMO performs better than previous algorithms.

% The instance means that six migrants go to two localities to find jobs under the interview migration model. The two matroid constraints are that each migrant can only go to one locality to apply for a job and each locality can accommodate at most three migrants. The solution violating these two matroid constraints is infeasible. 

Next, we try to design a specific MOEA for the migrant resettlement problem, bringing further performance improvement. Figure~\ref{heatmap}(a) gives an illustration of a migrant resettlement problem instance, in which six migrants go to two localities to find jobs. Let $V=\{v_1,\ldots,v_{|V|}\}$ and $L=\{l_1,\ldots,l_{|L|}\}$ denote a set of migrants and localities, respectively. A migrant-locality pair $(v,l)$ means that the migrant $v\in V$ is assigned to the locality $l\in L$. Each (solid or dotted) line in Figure~\ref{heatmap}(a) denotes a migrant-locality pair. Assume that the capacity of each locality is three. The two matroid constraints are that each migrant can only go to one locality and each locality can accommodate at most three migrants. When we consider the three solid lines in Figure~\ref{heatmap}(a) only, it is a feasible solution, i.e., satisfies the constraints. But if adding the two dotted lines, the constraints will be violated, since there is a migrant assigned to two localities and also a locality absorbing four migrants more than its capacity three. The solution violating these two matroid constraints is infeasible. Figure~\ref{heatmap}(b) gives the heat map of this migrant resettlement instance in the solution space, that is, the heat map of six migrants going to two localities to find jobs under the interview migration model. There are $6\times2$ migrant-locality pairs in total, and the solution of this instance can be naturally represented by a Boolean vector $\bm x\in\{0,1\}^{12}$, where the $i$-th bit $x_i=1$ iff the $i$-th migrant-locality pair is selected. We reduce the dimension of solutions from 12d to 2d using the t-SNE method~\cite{van2008visualizing}, which can ensure that the solutions with closer positions in the original space also have closer positions in the 2d projection space. In Figure~\ref{heatmap}(b), the color of a point corresponds to the objective function value, i.e., the expected number of migrants who find employment, of a solution; a point with the darkest color (i.e., the value $-1$) implies an infeasible solution. Then we can have two observations:\\
1. The whole space is filled with a large number of infeasible solutions; \\
2. Good feasible solutions may be surrounded by infeasible ones, as indicated by the circle. 

Observation 1 reflects the character of solutions subject to two matroid constraints. Let $n$ denote the total number of migrant-locality pairs, i.e., $n=|V| \cdot |L|$. For any feasible solution $\bm x\in\{0,1\}^{n}$, if the bit corresponding to migrant-locality pair $(v_i,l_j)$ is 1, then any bit corresponding to migrant-locality $(v_i,l_k)$, where $k\neq j$, must be 0. Otherwise, the first matroid constraint, that each migrant can only go to one locality to apply for a job, will be violated. Note that $\bm x$ also needs to satisfy the second matroid constraint, that is, for any locality $l_j$, the number of migrants assigned to locality $l_j$ cannot exceed the capacity of the corresponding locality, which limits the number of 1s in those bits corresponding to $(\cdot, l_j)$. However, the classical bit-wise mutation operator flips each bit of $\bm x$ independently with equal probability, which will probably generate an infeasible solution rather than a better feasible solution.

To better explore the feasible region, we propose the matrix-swap mutation operator. We first represent a solution $\bm x\in\{0,1\}^n$ by a Boolean matrix of size $|V|\times |L|$, where $|V|$ and $|L|$ denote the number of migrants and localities, respectively, and $n=|V|\times |L|$. The matrix representation of a feasible solution contains at most one 1-bit in each row, and no more 1-bits in each column than the capacity of the corresponding locality. Matrix-swap mutation selects two rows (or two columns) of a parent solution uniformly at random and swaps them to generate an offspring, which can better inherit the feasibility of the parent solution to reduce the probability of generating an infeasible offspring. Observation 2 shows that good feasible solutions may be surrounded by infeasible ones, implying that infeasible solutions may be also valuable. Thus, we introduce a repair mechanism, which randomly flips some migrant-locality bits of an infeasible offspring solution from 1 to 0 to fix it into feasible, instead of discarding it directly.

% \begin{figure}[t!]
% \centering
% \includegraphics[width=0.7\columnwidth]{Heatmap_tsne.pdf}
% \caption{The heat map of a migrant resettlement problem instance (where six migrants go to two localities to find jobs under the interview migration model) in the solution space. The color of a point corresponds to the objective function value, i.e., the expected number of migrants who find employment, of a solution; a point with the darkest color (i.e., the value $-1$) implies an infeasible solution.}
% \label{heatmap}
% \end{figure}

By incorporating the proposed matrix-Swap mutation operator and the Repair mechanism into GSEMO, we propose a new MOEA, called GSEMO-SR. Empirical results show that MR-EMO using GSEMO-SR is better than previous algorithms. Moreover, MR-EMO using GSEMO-SR is significantly better than using NSGA-II, MOEA/D or GSEMO. The ablation experimental results show that GSEMO-SR employing only matrix-swap mutation or repair mechanism can also improve the performance of MR-EMO, and applying both of them performs the best, which verifies the effectiveness of the operators. Note that the matrix-swap mutation operator and repair mechanism can be combined with other MOEAs, which may also bring performance improvement.

In addition to better empirical results, MR-EMO can also achieve a theoretical advantage. We prove that MR-EMO using no matter GSEMO or GSEMO-SR can achieve an approximation guarantee of $1/(k+\frac{1}{p}+\frac{2\epsilon r}{1-\epsilon})$, which is better than $1/(k+1+\frac{4\epsilon r}{1-\epsilon})$ of the greedy algorithm~\cite{golz2019migration}, where $p \geq 1$. Here, an approximation guarantee of $\alpha$ (where $0<\alpha<1$) is the worst-case theoretical guarantee, which is commonly used in the theoretical analysis of optimization algorithms. It implies that for any instance of migrant resettlement, the solution generated by the algorithm has an objective value of at least $\alpha \cdot \mathrm{OPT}$, where $\mathrm{OPT}$ denotes the optimal function value. 

Though the proposed framework MR-EMO solves the migrant resettlement problem by considering the expected number of employed migrants only, the calculation of the expected number of employed migrants depends on the matching probability of each migrant with each job, which may be generated by considering migrants’ own technology, migrant wishes, local policies that support migrants, etc, during data preparation. For example, supervised learning models can be built to predict the matching probability, by considering migrants’ background characteristics from real-world data~\cite{bansak2018improving}.

The rest of the paper is organized as follows. Section~\ref{prob-migration} first introduces some preliminaries on the studied problem, i.e., migrant resettlement. Section~\ref{framework} then presents the general MR-EMO framework and the specific MOEA (i.e., GSEMO-SR) for migrant resettlement. Sections~\ref{Theo} and~\ref{exp} give the theoretical analysis and empirical study, respectively. Section~\ref{conclusion} finally concludes this paper.

%对应之前投稿审稿人说的移民问题的贡献行 我们可以再阐述一下改进机制对多逆阵问题的启发？
%其他算法的理论证明也可做

\section{Migrant Resettlement}\label{prob-migration}

Let $V=\{v_1,\ldots,v_{|V|}\}$ and $L=\{l_1,\ldots,l_{|L|}\}$ denote a set of migrants and localities, respectively. A migrant-locality pair $(v,l)$ means that the migrant $v \in V$ is assigned to the locality $l \in L$. For example, there are six migrants and two localities in Figure~\ref{heatmap}(a), and each (solid or dotted) line denotes a migrant-locality pair. Let $N=V\times L$ denote the set of all migrant-locality pairs. The migrant resettlement problem is to select a subset $X$ of migrant-locality pairs (i.e., $X \subseteq N$), that maximizes the expected number of employed migrants.

Suppose that the number of job opportunities at locality $l$ is $\mathrm{job}_l$. As there are different professions, e.g., drivers, chefs and painters in Figure~\ref{heatmap}(a), $\mathrm{job}_l$ is further divided, i.e., $\mathrm{job}_{l}=\sum_{\pi \in \Pi} \mathrm{job}_{l,\pi}$, where $\Pi$ denotes the set of professions and $\mathrm{job}_{l,\pi}$ denotes the number of job opportunities for profession $\pi$ at locality $l$. To simulate the competition among migrants, G{\"o}lz and Procaccia~\cite{golz2019migration} have introduced two migration models, i.e., interview and coordination models, which will be introduced in the following, respectively. We will also show how to calculate the objective function, i.e., the expected number of migrants who find employment, for each model.

\textbf{Interview Model.} Migrants can only find jobs that match their types of professions, that is, competition exists among migrants with the same profession. Given a set $X$ of selected migrant-locality pairs, let $V_{l,\pi}=\{v\mid (v,l)\in X, \text{and the profession of $v$ is $\pi$}\}$, i.e., the set of migrants with profession $\pi$ assigned to locality $l$. To allocate the jobs of profession $\pi$ at locality $l$, the interview model assumes that there is a specific order for the migrants in $V_{l,\pi}$, and each one will be interviewed sequentially. For $1\leq i\leq |V_{l,\pi}|$, let $v^i_{l,\pi}$ denote the $i$-th migrant in $V_{l,\pi}$. When interviewing the migrant $v^i_{l,\pi}$, we use $\mathrm{job}_{l,\pi}^{i}$ to denote the remaining number of available jobs of profession $\pi$ at locality $l$. The matching probability of migrant $v^i_{l,\pi}$ and each of the $\mathrm{job}_{l,\pi}^{i}$ available jobs is $p_{v^i_{l,\pi},l}$, and $v^i_{l,\pi}$ will try to apply for each of the $\mathrm{job}_{l,\pi}^{i}$ available jobs. If there is at least one success in these $\mathrm{job}_{l,\pi}^{i}$ attempts, $v^i_{l,\pi}$ will be employed. Thus, the employment of $v^i_{l,\pi}$ can be represented by an indicator function $\mathbb{I}(\mathrm{job}_{l,\pi}^{i},p_{v^i_{l,\pi},l})$, which equals 1 iff the binomially distributed random variable $\mathcal{B}(\mathrm{job}_{l,\pi}^{i},p_{v^i_{l,\pi},l})$ is at least 1. At the beginning, $\mathrm{job}_{l,\pi}^{1}=\mathrm{job}_{l,\pi}$, i.e., the initial number of job opportunities for profession $\pi$ at locality $l$. As the interview process goes, $\forall i >1: \mathrm{job}_{l,\pi}^{i}=\mathrm{job}_{l,\pi}^{i-1}-\mathbb{I}(\mathrm{job}_{l,\pi}^{i-1},p_{v^{i-1}_{l,\pi},l})$. Then, the number of migrants with profession $\pi$ that find jobs at locality $l$ can be represented as $\sum^{|V_{l,\pi}|}_{i=1}\mathbb{I}(\mathrm{job}_{l,\pi}^{i},p_{v^i_{l,\pi},l})$, and the expected total number of employed migrants for the interview model is
\begin{align}\label{eq-interview-model}f(X)=\sum_{l\in L}\sum_{\pi\in\Pi} \mathbb{E} \left[\sum_{i=1}^{|V_{l,\pi}|}\mathbb{I}(\mathrm{job}_{l,\pi}^{i},p_{v^i_{l,\pi},l})\right],\end{align}
where the order of migrants in $V_{l,\pi}$ is determined randomly.

\textbf{Coordination Model.} Different from the above interview model where a migrant can only find a job of her profession, the coordination model can coordinate the assignment by compatibility between all migrants and jobs at locality $l$, demonstrating a more realistic situation. The matching probability of each migrant $v$ and a job of profession $\pi$ is $p_{v,\pi}$, which breaks the strict restriction between professions, because migrants also have opportunities to work in other professional jobs related to their careers or less skilled. Given a set $X$ of selected migrant-locality pairs, let $V_l =\{v\mid (v,l)\in X\}$ denote the set of migrants assigned to locality $l$, i.e., $V_l=\cup_{\pi\in \Pi} V_{l,\pi}$. We construct a bipartite graph $B(V_l)$, where the two sets of vertices are $V_l$ and the $\mathrm{job}_l$ jobs, respectively. For each $v\in V_l$ and each job $j$, we add one edge between them with probability $p_{v,\pi_j}$, where $\pi_j$ denotes the profession of job $j$. Then, the number of migrants that find jobs at locality $l$ is calculated as the size of maximum matching of the bipartite graph $B(V_l)$. Thus, the expected number of employed migrants for the coordination model is
\begin{align}\label{eq-coord-model}
f(X)=\sum_{l\in L} \mathbb{E} \left[ \text{size of maximum matching of $B(V_l)$}\right].
\end{align}

The objective functions, defined as Eqs. (1) and (2), of the interview model and coordination model are all monotone submodular~\cite{golz2019migration}. A set function $f: 2^N \rightarrow \mathbb{R}$ is monotone if $\forall X \subseteq Y \subseteq N$, $f(X) \leq f(Y)$. As more migrant-locality pairs will not worsen the value, the monotonicity is satisfied naturally. Assume w.l.o.g. that monotone functions are normalized, i.e., $f(\emptyset)=0$. A set function $f$ is submodular~\cite{nemhauser1978analysis} if it satisfies the ``diminishing returns'' property, i.e., $\forall X\subseteq Y\subseteq N=V\times L, (v,l) \notin Y$, \begin{align}\label{eq-submodular-1}f(X\cup \{(v,l)\})-f(X)\ge f(Y\cup\{(v,l)\})-f(Y),\end{align} or equivalently $\forall X\subseteq Y\subseteq N$, \begin{align}\label{eq-submodular-2}f(Y)-f(X)\leq \sum_{(v,l)\in Y\setminus X}(f(X\cup\{(v,l)\})-f(X)).\end{align}
The diminishing returns property Eq.~(\refeq{eq-submodular-1}) reflects the competition effects naturally, i.e., the chance of migrant $v$ to find a job at locality $l$ decreases as the number of migrants competing for jobs increases.

For the migrant resettlement problem, there are also two constraints to be satisfied. One is that each migrant can only be distributed to at most one locality and get one job, and the other is that each locality $l$ can accept at most $\mathrm{cap}_l$ migrants. $\mathrm{cap}_l$ is called the capacity of locality $l$, and can be different from $\mathrm{job}_l$ (i.e., the number of job opportunities at locality $l$), because some jobs may be allocated to natives. Let us see Figure~\ref{heatmap}(a) for an illustration. Assume that the capacity of each locality is three. When we consider the three solid lines only, it is a feasible solution, i.e., satisfies the constraints. But if adding the two dotted lines, the constraints will be violated, since there is a migrant assigned to two localities and also a locality absorbing four migrants more than its capacity three.

Let $\{N^{\mathrm{mig}}_1,\ldots,N^{\mathrm{mig}}_{|V|}\}$ and $\{N^{\mathrm{loc}}_1,\ldots,N^{\mathrm{loc}}_{|L|}\}$ denote two partitions of $N$, where $N^{\mathrm{mig}}_i=\{(v_i,l) \mid l\in L\}$ and $N^{\mathrm{loc}}_i=\{(v,l_i)  \mid v\in V\}$ are the set of migrant-locality pairs with migrant $v_i$ and locality $l_i$, respectively. Then, the two constraints can be represented as $\forall 1\leq i\leq |V|: |X\cap N^{\mathrm{mig}}_i|\leq 1$ and $\forall 1\leq i\leq |L|: |X\cap N^{\mathrm{loc}}_i| \leq \mathrm{cap}_i$, which are actually two partition matroids. A matroid is a pair $(N,\mathcal{F}\subseteq 2^N)$, satisfying the hereditary (i.e., $\emptyset\in\mathcal{F}$ and $\forall X\subseteq Y\in \mathcal{F}:X\in\mathcal{F}$) and augmentation (i.e., $\forall X,Y\in \mathcal{F}$, $|X|>|Y|: \exists (v,l) \in X \setminus Y, Y\cup\{(v,l)\}\in \mathcal{F}$) properties~\cite{calinescu2011maximizing}. Thus, the constraints of migrant resettlement can be viewed as the intersection of two specific matroids. As observed from Figure~\ref{heatmap}(b), these two matroid constraints will lead to a large number of infeasible solutions.

In~\cite{golz2019migration}, migrant resettlement has been generally formulated as the problem of maximizing a monotone submodular function subject to multiple matroid constraints, as shown in Definition~\ref{def-problem}, where the monotone submodular objective function characterizes the expected number of employed migrants under different models.

\begin{definition}[Migrant Resettlement]\label{def-problem}
Given all migrants $V=\{v_1,\ldots,v_{|V|}\}$ and localities $L=\{l_1,\ldots,l_{|L|}\}$, a monotone submodular function $f:2^{N} \rightarrow \mathbb{R}^+$, and $k$ matroids $\mathcal{F}_1,\ldots,\mathcal{F}_k\subseteq 2^N$, the goal of migrant resettlement is to find a subset $X\subseteq N$ of migrant-locality pairs such that
\begin{equation}
\begin{aligned}\label{eq:subsetsel}
\mathop{\arg\max}\nolimits_{X\subseteq N} f(X) \quad \text{s.t.}\quad X\in \bigcap\nolimits_{i=1}^{k}\mathcal{F}_{i},
\end{aligned}
\end{equation}
where $N=V\times L$ is the set of all migrant-locality pairs.
\end{definition}

However, we often cannot obtain the exact value of the objective $f$ in practice, but only a noisy one, denoted as $\hat{f}$. For example, the objective functions in Eqs.~(\refeq{eq-interview-model}) and~(\refeq{eq-coord-model}) are all the expectation of a complicated random variable, which can be estimated only by the average of multiple sampling, resulting in the inexactness and even the violation of monotonicity and submodularity. Thus, the objective obtained by algorithms is assumed to be $\hat{f}$ instead of $f$, which only satisfies the $\epsilon$-approximate submodularity in Definition~\ref{def-epsilon-app-submodular}. That is, the value of $\hat{f}$ is bounded between $(1-\epsilon)\cdot f$ and $(1+\epsilon)\cdot f$. When $\epsilon=0$, $\hat{f}$ is exactly the true objective $f$.

\begin{definition}[$\epsilon$-Approximate Submodularity~\cite{horel2016maximization,qian2019maximizing}]\label{def-epsilon-app-submodular}Let $\epsilon\ge 0$. A set function $\hat{f}:2^{N}\rightarrow \mathbb{R}$ is $\epsilon$-approximately submodular if there exists a monotone submodular function $f$ such that $\forall X\subseteq N$,
\begin{align}\label{eq:epsilon}
    (1-\epsilon)\cdot f(X)\leq \hat{f}(X) \leq (1+\epsilon)\cdot f(X).
\end{align}
\end{definition}

For the migrant resettlement problem in Definition~\ref{def-problem}, G{\"o}lz and Procaccia~\cite{golz2019migration} proposed the greedy algorithm, which starts from the empty set, and iteratively adds one migrant-locality pair with the largest marginal gain on $\hat{f}$ while satisfying the constraints. The greedy algorithm achieves a $(1/(k+1+\frac{4\epsilon r}{1-\epsilon}))$-approximation guarantee, where $r$ denotes the size of the largest feasible solution, i.e., the largest subset of $N$ belonging to $\bigcap_{i=1}^{k}\mathcal{F}_{i}$. That is, for any instance of migrant resettlement, the objective function value of the solution $X$ generated by the greedy algorithm satisfies
\begin{align*}
f(X) \geq \left(1\big/\left(k+1+\frac{4\epsilon r}{1-\epsilon}\right)\right) \cdot \mathrm{OPT}, 
\end{align*}
where $\mathrm{OPT}$ denotes the optimal function value.

\section{MR-EMO Framework}\label{framework}
%In this section, we first introduce the proposed Migrant Resettlement framework based on Evolutionary Multi-objective Optimization, called MR-EMO. After that, we introduce the specific designed MOEA, i.e., GSEMO-SR, which employs a matrix-swap mutation operator and a repair mechanism.
Inspired by the excellent performance of MOEAs for solving general subset selection problems~\cite{ecj15submodular,qian2015subset,QianS0TZ17,qian2019maximizing,Qian20,NeumannN20,roostapour2022pareto}, we propose a new Migrant Resettlement framework based on Evolutionary Multi-objective Optimization, called MR-EMO. Assume that $|N|=|V|\cdot |L|=n$, i.e., there are totally $n$ pairs of migrant-locality. A subset $X$ of $N$ can be naturally represented by a Boolean vector $\bm{x} \in\{0,1\}^n$, where the $i$-th bit $x_{i}=1$ iff the $i$-th migrant-locality pair in $N$ is contained by $X$. In the following, we will not distinguish $\bm{x}\in\{0,1\}^n$ and its corresponding subset of migrant-locality pairs for notational convenience.

As presented in Algorithm~\ref{alg:MR-EMO}, MR-EMO first reformulates the original migrant resettlement problem in Definition~\ref{def-problem} as a bi-objective maximization problem 
\begin{align}\label{def-CO-BO}
\arg\max\nolimits_{\bm{x} \in \{0,1\}^n}& \quad  (f_1(\bm{x}),f_2(\bm{x})),
\end{align}
where
\begin{align*}
f_1(\bm{x}) = \begin{cases}
	\hat{f}(\bm{x}), &{\bm{x} \in \bigcap_{i=1}^{k}\mathcal{F}_{i}}\\
	-1, &{\text{otherwise}}
\end{cases},\quad
f_{2}(\bm{x})=|\bm{x}|_{0}.
\end{align*}
That is, the first objective $f_1$ equals the original objective $\hat{f}$ (i.e., the expected number of employed migrants) for feasible solutions (i.e., solutions satisfying the constraints), while $-1$ for infeasible ones; the second objective $f_2$ equals the number of 0-bits of a solution. As the two objectives may be conflicting, the domination relationship in Definition~\ref{def:domination} is often used for comparing two solutions. A solution is Pareto optimal if no other solution dominates it. The collection of objective vectors of all Pareto optimal 
solutions is called the Pareto front.

\begin{definition}[Domination]\label{def:domination}
For two solutions $\bm{x}$ and $\bm{x}'$,\\
1. $\bm{x}$ weakly dominates $\bm{x}'$ (i.e., $\bm{x}$ is better than $\bm{x}'$, denoted by $\bm{x}\succeq \bm{x}'$) if $\ \forall i: f_{i}(\bm{x})\ge f_{i}(\bm{x}')$;\\
2. $\bm{x}$ dominates $\bm{x}'$ (i.e., $\bm{x}$ is strictly better than $\bm{x}'$, denoted by $\bm{x}\succ \bm{x}'$) if $\bm{x}\succeq \bm{x}' \land \exists i: f_{i}(\bm{x})> f_{i}(\bm{x}')$;\\
3. $\bm{x}$ and $\bm{x}'$ are incomparable if neither $\bm{x}\succeq \bm{x}'$ nor $\bm{x}'\succeq \bm{x}$.
\end{definition}

After constructing the bi-objective problem in Eq.~(\refeq{def-CO-BO}), MR-EMO employs an MOEA to solve it, as shown in line~2 of Algorithm~\ref{alg:MR-EMO}. Evolutionary algorithms (EAs), inspired by Darwin’s theory of evolution, are general-purpose randomized heuristic optimization algorithms~\cite{back:96}, mimicking variational reproduction and natural selection. Starting from an initial population of solutions, EAs iteratively reproduce offspring solutions by crossover and mutation, and select better ones from the parent and offspring solutions to form the next population. The population-based search of EAs matches the requirement of multi-objective optimization, i.e., EAs can generate a set of Pareto optimal solutions by running only once. Thus, EAs have become the most popular tool for multi-objective optimization~\cite{coello2007evolutionary,he2021survey,hong2021evolutionary,liang2024evolutionary,yu2022survey}, and the corresponding algorithms are also called MOEAs. After running a number of iterations, the best solution, i.e., $\arg\max_{\bm{x} \in P, \bm{x}\in\bigcap_{i=1}^{k}\mathcal{F}_{i}} \hat{f}(\bm{x})$ will be selected from the final population as the output, as shown in line~3 of Algorithm~\ref{alg:MR-EMO}. Note that the aim of MR-EMO is to find a good solution of the original migrant resettlement problem in Definition~\ref{def-problem}, rather than the Pareto front of the reformulated bi-objective problem in Eq.~(\ref{def-CO-BO}). That is, the bi-objective reformulation is an intermediate process. The introduction of the second objective $f_2$ can naturally bring a diverse population, which may lead to better optimization performance. 

\begin{algorithm}[t!]\caption{MR-EMO Framework}\label{alg:MR-EMO}
\textbf{Input}: a migrant resettlement problem instance\\
\textbf{Output}: a subset of migrant-locality pairs satisfying multiple matroid constraints\\
\textbf{Process}:
    \begin{algorithmic}[1]
    \STATE Construct the two objective functions $f_1(\bm{x})$ and $f_2(\bm{x})$ to be maximized, as presented in Eq.~(\refeq{def-CO-BO});
    \STATE Apply an MOEA to solve the bi-objective problem;
    \STATE \textbf{return} the best feasible solution in the final population generated by the MOEA
    \end{algorithmic}
\end{algorithm}

MR-EMO can be equipped with any MOEA. In this paper, we will apply NSGA-II~\cite{nsgaii}, MOEA/D~\cite{moead} and GSEMO~\cite{giel2003expected,Laumanns04,neumann2006minimum}. NSGA-II is a representative of Pareto dominace based MOEAs, which selects two parents by binary tournament selection, employs crossover and mutation to generate offspring solutions, and updates the population based on non-dominated sorting and crowding distance. MOEA/D is a representative of decomposition based MOEAs, which decomposes a multi-objective optimization problem into a number of scalar optimization subproblems and optimizes them collaboratively. For each current solution, it randomly selects two neighboring solutions as parents, and employs crossover and mutation to generate an offspring solution. By comparing the newly generated offspring solution with neighboring solutions, the better one will be kept in the population. 

%We add a simple post-process for the generated solution by MOEAs. That is, for this generated solution, if some localities do not reach the capacity, we can simply dispatch more migrants to that locality until reaching the capacity, which will not decrease the employment rate due to the monotone  non-decreasing property of the objective function. Thus, by this post-process procedure, we can take full advantage of the dispatch capacity of each locality. Meanwhile, we get a better employment rate due to the bi-objective reformulation.

GSEMO is relatively simple but has shown good theoretical properties in solving many problems~\cite{neumann2010bioinspired,qian19el,doerr-20-book}. As presented in Algorithm~\ref{alg:GSEMO}, it starts from the all-0 vector $\bm{0}$ (i.e., the empty set) in line~1, and iteratively improves the quality of solutions in the population $P$ (lines~2--7). In each iteration, it first selects a parent solution $\bm{x}$ from the current population $P$ uniformly at random in line~3. Next, it applies bit-wise mutation on $\bm{x}$ (lines~4), which flips each bit of $\bm x$ with probability $1/n$ to generate an offspring solution $\bm{x}'$. Then, $\bm{x}'$ is used to update the population $P$ (lines~5--7). If $\bm{x}'$ is not dominated by any solution in $P$ (line~5), it will be added into $P$, and meanwhile, those solutions weakly dominated by $\bm{x}'$ will be deleted (line~6). By the updating procedure in lines~5--7 of Algorithm~\ref{alg:GSEMO}, we know that the solutions in the population $P$ must be incomparable. Furthermore, $P$ will always contain the initial solution $\bm 0$, because $\bm 0$ has the largest value (i.e., $n$) of $f_2$, and no other solution can weakly dominate it. Thus, any infeasible solution $\bm x$ will be excluded from $P$, because $f_1(\bm x)=-1$ and $f_2(\bm x)<n$, implying that $\bm x$ is dominated by $\bm 0$, whose two objective values are 0 and $n$, respectively. Note that the original version of GSEMO in~\cite{Laumanns04} starts from a random initial solution, while we modify it to start from the all-0s solution $\bm{0}$, as the all-0s solution will be used in theoretical analysis, which will be clear in the next section.

\begin{algorithm}[t!]
\caption{GSEMO Algorithm}
\label{alg:GSEMO}
%\textbf{Parameter}: probability $p_m\in\{0,1\}$;\\
%\textbf{Output}: a subset of $N$ belonging to the $k$ matroids\\
\textbf{Process}:
\begin{algorithmic}[1]
\STATE Let $P=\{\bm{0}\}$;
\STATE \textbf{repeat}
\STATE \quad Select $\bm{x}$ from $P$ uniformly at random;
%\STATE \quad $q \leftarrow$ a number selected from $[0,1]$ uniformly at random;
%\STATE \quad\textbf{if}  {$q\leq p_m $} \,\textbf{then}
\STATE  \quad Apply bit-wise mutation on $\bm{x}$ to generate $\bm{x}'$;
%\STATE \quad\textbf{else}
%\STATE \quad \quad Apply matrix-swap mutation on $\bm{x}$ to generate $\bm{x}'$
%\STATE \quad\textbf{end if}
%\STATE \quad $\bm{x}''\leftarrow$ Repair($\bm{x}'$);
\STATE \quad\textbf{if}  {$\nexists \bm z \in P$ such that $\bm z \succ \bm{x}'$} \,\textbf{then}
    \STATE  \quad\quad $P \gets (P \setminus \{\bm z \in P \mid \bm{x}' \succeq \bm z\}) \cup \{\bm{x}'\}$
\STATE \quad\textbf{end if}
\STATE \textbf{until} some criterion is met
%\STATE \textbf{return} $\arg\max_{\bm{x}\in P} \hat{f}(\bm{x})$
\end{algorithmic}
\end{algorithm}

Our experimental results will show that MR-EMO using the existing NSGA-II, MOEA/D or GSEMO has already performed better than previous algorithms. Next, by analyzing the characteristics of the migrant resettlement problem and the limitation of the traditional reproduction operators, we will develop a new MOEA, which is more suitable for MR-EMO and will lead to better performance.

\subsection{GSEMO-SR Algorithm}
% 第一段：介绍matrix 表示

As observed in Figure~\ref{heatmap}(b), there are a large number of infeasible solutions for the migration resettlement problem. In the previous section, we have represented a subset $X$ of $N$ migrant-locality pairs (i.e., $X\subseteq N= V\times L$, where $V$ and $L$ denote the set of migrants and localities, respectively) by a Boolean vector $\bm{x}\in\{0,1\}^n$, where $n=|N|=|V|\times|L|$. If using the traditional one-point crossover and bit-wise mutation operators, the generated offspring solutions are easily infeasible, and thus the feasible regions cannot be explored efficiently. To better illustrate this phenomenon, we also represent a solution $\bm{x}\in\{0,1\}^{n}$ by a matrix of size $|V|\times|L|$, where each row and column correspond to one migrant $v_i$ and one locality $l_j$, respectively, and $x_{\{v_i,l_j\}}$ denotes the element at the $i$-th row and $j$-th column of the matrix, corresponding to the $((i-1)\cdot |L|+j)$-th bit of $\bm{x}$. Take a migrant resettlement problem instance with six migrants and three localities as an example, where each capacity of the locality is two. Figure~\ref{solution} gives a Boolean vector solution of this instance, as well as its corresponding matrix representation, where migrants $\{v_2,v_6\}$, $\{v_1,v_5\}$ and $\{v_3,v_4\}$ are assigned to locality $l_1$, $l_2$ and $l_3$, respectively. This solution is feasible because the number of $1$-bits in each row does not exceed 1 and the number of $1$-bits in each column does not exceed 2, i.e., the capacity of each locality. That is, each migrant is distributed to at most one locality and each locality accepts at most two migrants; thus, the solution satisfies the two matroid constraints. Note that by the matrix representation, the behaviors of one-point crossover and bit-wise mutation can still be illustrated clearly.

\begin{figure}[t!]
\centering
\includegraphics[width=0.6\columnwidth]{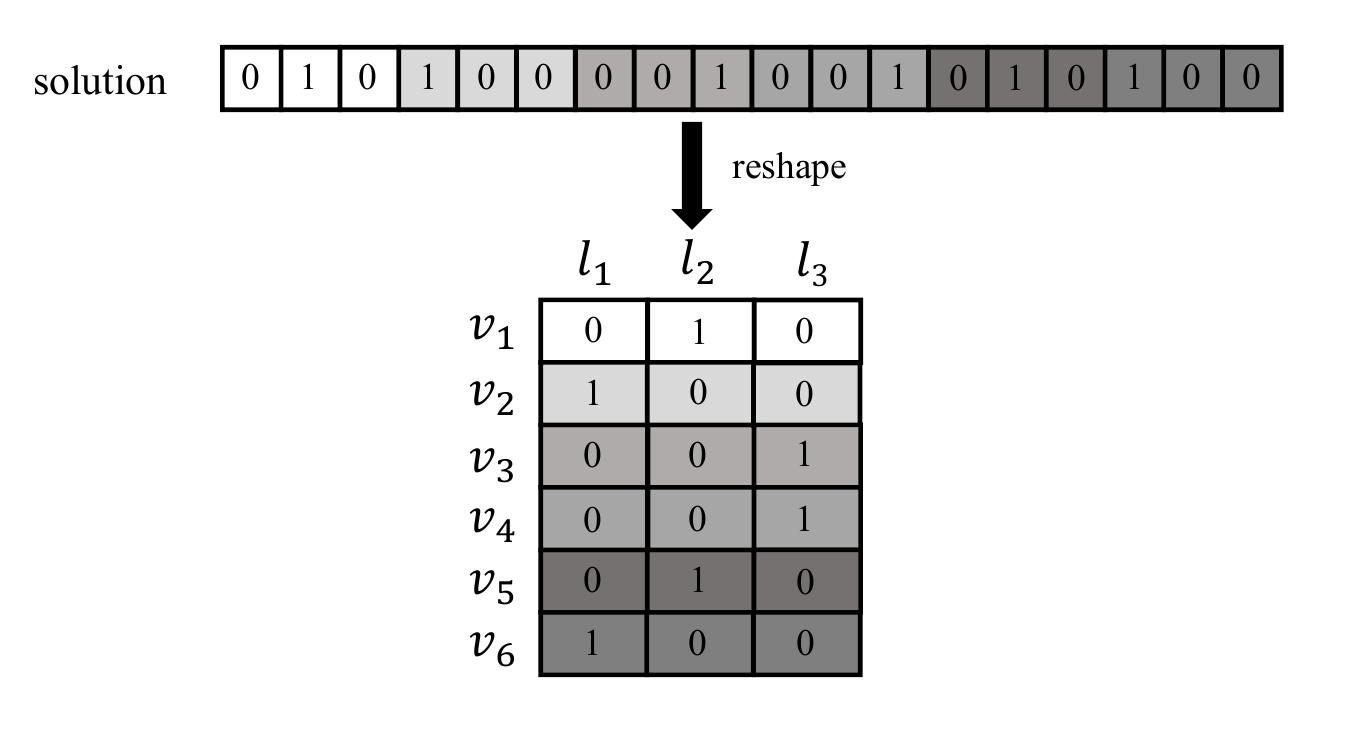}
\caption{A feasible solution (Boolean vector representation and matrix representation) of a migrant resettlement problem instance with six migrants and three localities, where each locality can accommodate at most two migrants.}
\label{solution}
\end{figure}
% 第二段：介绍现有operator的问题; 第2段的逻辑：介绍one-point crossover的例子、bit-wise mutation的例子; 说下在这两个例子中，父代解均为feasible，但产生了infeasible的子代解; 再强调实际上，大概率都会产生infeasible的子代解, 最后结论。
One-point crossover selects $i\in\{1,2,\cdots, n\}$ randomly, and exchanges the bits after the $i$-th bit of two parent solutions. For Figure~\ref{crossover_mutation}(a), the one-point crossover operator selects $i=10$ and then exchanges the bits after the $10$-th bit of $\bm{x}$ and $\bm{y}$ to generate offspring solutions $\bm{x}'$ and $\bm{y}'$. Bit-wise mutation flips each bit of a solution independently with probability $1/n$. For Figure~\ref{crossover_mutation}(b), the bit-wise mutation flips the $1$-st and $11$-th bits of $\bm{x}$ and keeps other bits unchanged to generate an offspring solution $\bm{x}'$. In the above two examples, we can find that the parents are all feasible, but produce infeasible offspring solutions. In fact, this phenomenon will often occur by applying traditional one-point crossover and bit-wise mutation. For parent solutions $\bm{x}$ and $\bm{y}$ in Figure~\ref{crossover_mutation}(a), the results of executing one-point crossover on $\bm{x}$ and $\bm{y}$ all lead to two infeasible offspring solutions, except for exchanging the bits after the $1$-st, $17$-th or $18$-th bit of them. For parent solution $\bm{x}$ in Figure~\ref{crossover_mutation}(b), an extra $1$-bit in any row or column of $\bm{x}$ will result in an infeasible solution $\bm{x}'$. Thus, the one-point crossover and bit-wise mutation operators may not explore the feasible solution space efficiently. 

\begin{figure*}[t!]\centering
\begin{minipage}[c]{0.53\linewidth}\centering
        \includegraphics[width=0.9\linewidth]{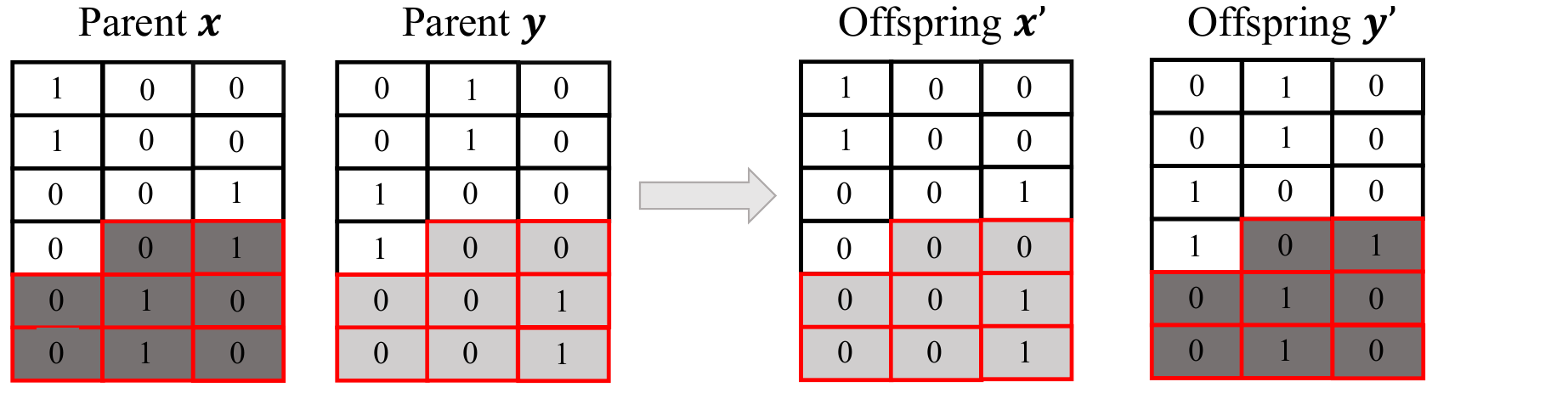}
\end{minipage}
\begin{minipage}[c]{0.3\linewidth}\centering
        \includegraphics[width=0.9\linewidth]{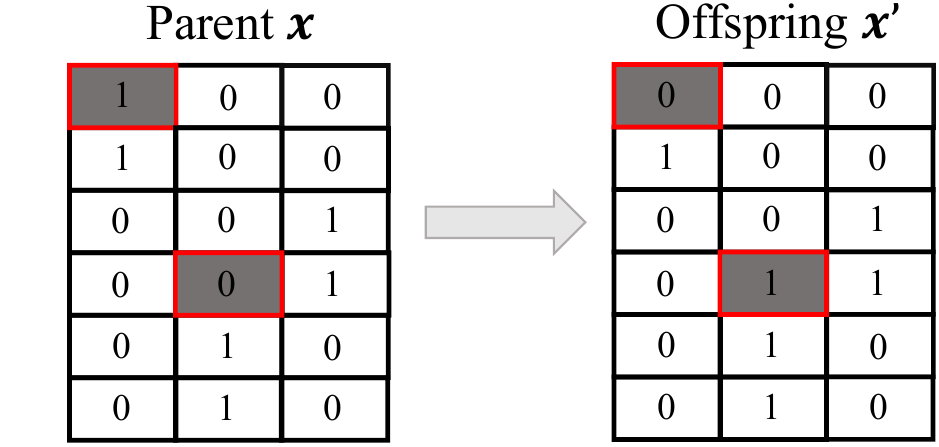}
\end{minipage}\\\vspace{1em}
\begin{minipage}[c]{0.53\linewidth}\centering
    \small(a) One-point crossover
\end{minipage}
\begin{minipage}[c]{0.3\linewidth}\centering
    \small(b) Bit-wise mutation
\end{minipage}
\caption{Examples illustration of (a) one-point crossover and (b) bit-wise mutation.}\label{crossover_mutation}
\end{figure*}

\begin{figure*}[t!]
\begin{minipage}[c]{0.3\linewidth}\centering
        \includegraphics[width=0.8\linewidth]{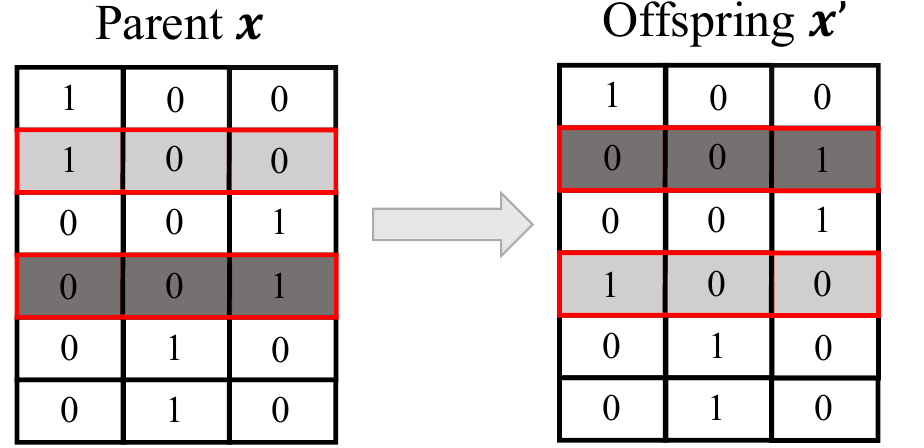}
\end{minipage}
\begin{minipage}[c]{0.3\linewidth}\centering
        \includegraphics[width=0.8\linewidth]{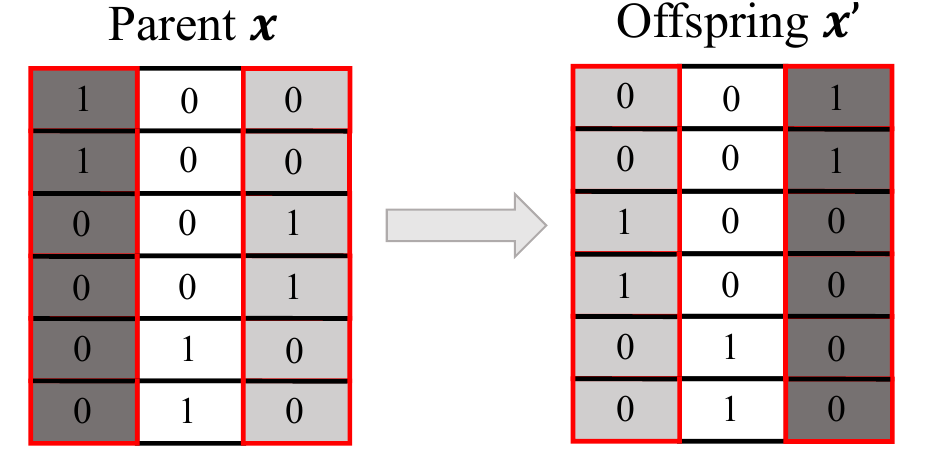}
\end{minipage}
\begin{minipage}[c]{0.3\linewidth}\centering
        \includegraphics[width=0.8\linewidth]{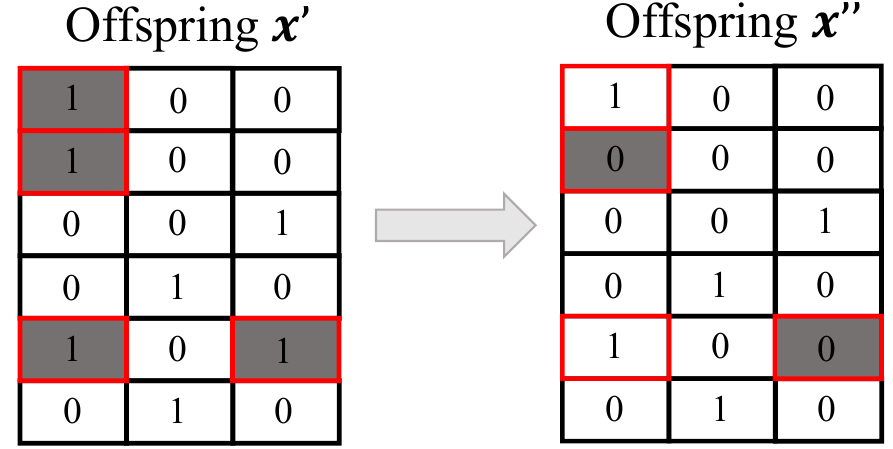}
\end{minipage}\vspace{1em}
\begin{minipage}[c]{0.34\linewidth}\centering
    \small(a) Matrix-swap mutation \\by rows
\end{minipage}
\begin{minipage}[c]{0.3\linewidth}\centering
    \small(b) Matrix-swap mutation \\by columns
\end{minipage}
\begin{minipage}[c]{0.3\linewidth}\centering
    \small(c) Repair mechanism
\end{minipage}
\caption{Example illustration of (a) matrix-swap mutation by rows and (b) matrix-swap mutation by columns, and (c) repairing an infeasible solution.}\label{matrix_swap_repair}
\end{figure*}

% 第三段：介绍 matrix mutation; 第3段介绍 matrix-swap mutation 注意也要把这个mutation用定义环境严格表述一遍  
We then propose a new mutation operator named matrix-swap mutation in Definition~\ref{def-matrix-swap}, which has a better ability to explore the feasible region of solution space. It can be performed by rows or columns. Figure~\ref{matrix_swap_repair}(a) shows that matrix-swap mutation by rows selects the second and the fourth rows uniformly at random, and exchanges the corresponding bits of the second and the fourth rows of $\bm{x}$ to generate a new offspring solution $\bm{x}'$. The matrix-swap mutation ensures that $\bm{x}'$ is still feasible because it has the same number of $1$-bits of each row and column as its feasible parent $\bm{x}$. Also in Figure~\ref{matrix_swap_repair}(b), matrix-swap mutation by columns randomly selects the first and the third columns, and then $\bm{x}'$ is generated by swapping the corresponding bits of the first and the third columns of $\bm{x}$. Note that $\bm{x}'$ is also feasible if the capacity of each locality is the same. Compared with classical operators (i.e., the one-point crossover and bit-wise mutation operators), the matrix-swap mutation operator can better explore the solution space while maintaining solution feasibility.

\begin{definition}[Matrix-swap mutation]\label{def-matrix-swap}
Let $M^{|V|\times|L|}$ be a matrix representation of solution $\bm x\in\{0,1\}^{n}$, where $n=|V|\times|L|$. The matrix-swap mutation by rows selects $i,j\in\{1,\cdots,|V|\}$ uniformly at random with replacement, and then exchanges the $i$-th row and the $j$-th row of $M$. Similarly, matrix-swap mutation by columns uniformly selects $i,j\in\{1,\cdots,|L|\}$ at random with replacement and exchanges the 
$i$-th column and the $j$-th column of $M$.
\end{definition}

% 第四段：介绍repair 注意也要把repair用定义环境严格表述一遍 在引出repair的时候，可以延用introduction帮你改的写法: 
As observed in Figure~\ref{heatmap}(b), good feasible solutions may be surrounded by infeasible ones. Thus, we introduce a repair mechanism in Definition~\ref{def-repair} to repair an infeasible offspring solution to be feasible, instead of discarding it directly. For an offspring $\bm{x}'$ in Figure~\ref{matrix_swap_repair}(c), the number of $1$-bits of the fifth row of $\bm{x}'$ is 2, and then the repair mechanism randomly flips $1=\max\{2-1,0\}$ $1$-bit of the fifth row. Similarly, the total number of $1$-bits of the first column is 3, which exceeds the capacity 2 of the first locality, and $1=\max\{3-2,0\}$ $1$-bit of the first column has been flipped. After repairing each row and each column of $\bm{x}'$ in this way, the new offspring $\bm{x}''$ must be feasible. Note that the repair mechanism will not change any bit of a feasible offspring solution.

\begin{definition}[Repair mechanism]\label{def-repair}
Let $M^{|V|\times|L|}$ be a matrix representation of solution $\bm x\in\{0,1\}^{n}$, where $n=|V|\times|L|$. The repair mechanism first counts the total number of $1$-bits of each row $i$ from M, denoted by $x_{\{v_i,\cdot\}}=\sum_{j=1}^{|L|} x_{\{v_{i},l_{j}\}}$. Then it randomly flips $\max\{x_{\{v_{i},\cdot\}}-1,0\}$ $1$-bits of row $i$ and keeps other $0$-bits unchanged. Next, it also counts the total number of $1$-bits of each column $j$, denoted by $x_{\{\cdot,l_j\}}=\sum_{i=1}^{|V|} x_{\{v_i,l_j\}}$, and randomly flips $\max\{x_{\{\cdot,l_j\}}-cap_{l_j},0\}$ $1$-bits and keeps other $0$-bits unchanged.
\end{definition}

% 第五段：将matrix mutation和repair与GSEMO结合，形成GSEMO-SR，并进行介绍; 可以强调下，由于repair，GSEMO-SR总是包含feasible解

By incorporating the matrix-Swap mutation operator and the Repair mechanism into GSEMO, we propose a new MOEA, called GSEMO-SR. As presented in Algorithm~\ref{alg:GSEMO-SR}, it starts from the all-0 vector $\bm{0}$ (i.e., the empty set) in line~1, and iteratively improves the quality of solutions in the population $P$ (lines~2--13). In each iteration, it first selects a parent solution $\bm{x}$ from the current population $P$ uniformly at random in line~3. Next, it applies bit-wise mutation on $\bm{x}$ with probability $p_m$ (lines~5--6), which flips each bit of $\bm x$ independently with probability $1/n$ to generate an offspring solution $\bm{x}'$. Otherwise, GSEMO-SR applies the proposed matrix-swap mutation operator on $\bm{x}$ (lines~7--8), which can be performed by rows or columns with equal probability. Then the offspring solution $\bm{x}'$ will go through a repair mechanism in line~10, which can repair an infeasible solution to be feasible. Note that $\bm{x}''$ equals to $\bm{x}'$ if $\bm{x}'$ has already been feasible, and GSEMO-SR always contains feasible solutions. The repaired offspring solution, $\bm{x}''$ is used to update the population $P$ (lines~11--13). If $\bm{x}''$ is not dominated by any solution in $P$ (line~11), it will be added into $P$, and meanwhile, those solutions weakly dominated by $\bm{x}''$ will be deleted (line~12). Similar to GSEMO, the solutions contained by the population $P$ of GSEMO-SR must be incomparable.

\begin{algorithm}[ht]
\caption{GSEMO-SR Algorithm}
\label{alg:GSEMO-SR}
%\textbf{Input}: set of migrants $V=\{v_1,\ldots,v_{|V|}\}$; set of localities $L=\{l_1,\ldots,l_{|L|}\}$; $\epsilon$-approximately submodular objective $\hat{f}:2^{N} \rightarrow \mathbb{R}^+$; $k$ matroids $\mathcal{F}_1,\ldots,\mathcal{F}_k$; \\
\textbf{Parameter}: probability $p_m\in\{0,1\}$\\
%\textbf{Output}: a subset of $N$ belonging to the $k$ matroids\\
\textbf{Process}:
\begin{algorithmic}[1]
\STATE Let $P=\{\bm{0}\}$;
\STATE \textbf{repeat}
\STATE \quad Select $\bm{x}$ from $P$ uniformly at random;
\STATE \quad $q \leftarrow$ a number selected from $[0,1]$ uniformly at random;\\
\STATE \quad\textbf{if}  {$q\leq p_m $} \,\textbf{then}
\STATE \quad \quad Apply bit-wise mutation on $\bm{x}$ to generate $\bm{x}'$
\STATE \quad\textbf{else}
\STATE \quad \quad Apply matrix-swap mutation by rows or columns
\STATE \quad \quad (each with probability $1/2$) on $\bm{x}$ to generate $\bm{x}'$
\STATE \quad\textbf{end if}
\STATE \quad $\bm{x}''\leftarrow$ Repair($\bm{x}'$);
\STATE \quad\textbf{if}  {$\nexists \bm z \in P$ such that $\bm z \succ \bm{x}''$} \,\textbf{then}
    \STATE  \quad\quad $P \gets (P \setminus \{\bm z \in P \mid \bm{x}'' \succeq \bm z\}) \cup \{\bm{x}''\}$
\STATE \quad\textbf{end if}
\STATE \textbf{until} some criterion is met
%\STATE \textbf{return} $\arg\max_{\bm{x}\in P} \hat{f}(\bm{x})$
\end{algorithmic}
\end{algorithm}

% 第六段：总结GSEMO-SR和GSEMO的区别。提实验结果显示两个部件的有效性，GSEMO-SR显著好于其他MOEA。另外提及，matrix mutation和repair也可以和其他MOEA结合. 
Compared with GSEMO-SR, GSEMO in Algorithm~\ref{alg:GSEMO} only applies bit-wise mutation to generate a new offspring solution and no repair mechanism. In the experiments, we will show that using GSEMO-SR leads to the best performance of MR-EMO, which is significantly better than using NSGA-II, MOEA/D or GSEMO. We will also perform ablation study to validate the effectiveness of the introduced matrix-swap mutation operator and repair mechanism. Note that the matrix-swap mutation operator and repair mechanism can also be combined with other MOEAs to bring performance improvement, which will be studied in the future.

\section{Theoretical Analysis}\label{Theo}

Before examining the empirical performance of the proposed MR-EMO framework, we show that MR-EMO can achieve better theoretical guarantees than the previous greedy algorithm~\cite{golz2019migration} in this section.

First, we prove in Theorem~\ref{main-theorem} that MR-EMO using GSEMO (briefly called MR-GSEMO) can achieve an approximation guarantee of $1/(k+\frac{1}{p}+\frac{2\epsilon r }{1-\epsilon}+\frac{(1+\epsilon)\delta r}{1-\epsilon})$ after running at most $O(\frac{rn^{(k+1)p}}{\delta}\log{\frac{(1+\epsilon)r}{1-\epsilon}})$ expected number of iterations, where $k$ is the number of matroids, $\epsilon$ is the degree of approximate submodularity of $\hat{f}$ as shown in Definition~\ref{def-epsilon-app-submodular}, and $r$ is the size of the largest feasible solution. Note that $\mathrm{OPT}$ denotes the optimal function value of Eq.~(\refeq{eq:subsetsel}).

\begin{theorem}\label{main-theorem}
MR-GSEMO using $ O\left(\frac{rn^{2(k+1)p}}{\delta}\log{\frac{(1+\epsilon)r}{1-\epsilon}}\right)$ expected number of iterations finds a solution $X\in \bigcap_{i=1}^{k}\mathcal{F}_{i}$ with
\begin{equation}
\begin{aligned}\label{eq-approximation1}
\left(k+\frac{1}{p}+\frac{2\epsilon r }{1-\epsilon}+\frac{(1+\epsilon)\delta r}{1-\epsilon} \right)\cdot f(X)\ge \mathrm{OPT},
\end{aligned}
\end{equation}
where $k\ge2$, $p\ge1$, $\epsilon\ge0$ and $\delta>0$.
\end{theorem}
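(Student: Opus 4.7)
The plan is to follow the standard template for GSEMO convergence proofs on submodular maximization under matroid constraints (Friedrich--Neumann, Qian et al.), adapted to absorb both the $\epsilon$-approximate submodularity slack and the user-chosen $\delta$-tolerance. First I would nail down the structural properties of the population $P$ maintained by GSEMO. Because $f_1(\bm 0)=0>-1=f_1(\bm x)$ for every infeasible $\bm x$ while $f_2(\bm 0)=n\ge f_2(\bm x)$, the all-zero solution permanently dominates every infeasible $\bm x$; hence $\bm 0$ always lives in $P$ and every other member of $P$ is feasible for $\bigcap_{i=1}^k\mathcal F_i$. Since incomparable population members must disagree on $f_2$, one obtains $|P|\le r+1$. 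Consequently the current $\hat f$-best feasible solution $X\in P$ is selected as parent in any iteration with probability at least $1/(r+1)$.

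Next I would prove a $p$-exchange lemma for $k$-matroid intersection in the spirit of the Lee--Mirrokni--Nagarajan--Sviridenko local-search analysis. Given any feasible $X$ and an optimal feasible $X^{*}$, I would extract a family of disjoint swap moves $(A,B)$ with $A\subseteq X$, $B\subseteq X^{*}\setminus X$, and $|A|+|B|\le 2(k+1)p$, such that $X':=(X\setminus A)\cup B\in\bigcap_{i=1}^k\mathcal F_i$ for every swap and
\begin{align*}
\sum_{(A,B)}\bigl(f(X')-f(X)\bigr)\;\ge\;f(X^{*})-\bigl(k+\tfrac{1}{p}\bigr)f(X).
\end{align*}
Applying Definition~\ref{def-epsilon-app-submodular} once at each end converts this into the analogous inequality for $\hat f$ with an additive error of at most $(2\epsilon r/(1-\epsilon))\cdot f(X)$; the one-shot use of the $(1\pm\epsilon)$-sandwich is crucial to avoid incurring a $(1+\epsilon)^{\Theta(r)}$ blow-up. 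Averaging over the $\le r/p$ swaps then exhibits at least one single swap whose $\hat f$-marginal is large enough to make meaningful progress whenever $X$ has not yet reached the target ratio.

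Third, I would combine the exchange lemma with a multiplicative drift analysis. Bit-wise mutation flips exactly the bits indexed by $A\cup B$ (and no others) with probability at least $(1/n)^{2(k+1)p}(1-1/n)^{n-2(k+1)p}=\Omega(n^{-2(k+1)p})$, so the overall per-iteration success probability is $\Omega(r^{-1}n^{-2(k+1)p})$. The $\hat f$-best feasible solution in $P$ is monotone nondecreasing across iterations: any newly accepted offspring that weakly dominates it must be strictly better on $\hat f$ (incomparability is preserved otherwise). Setting $g(X):=\mathrm{OPT}-\bigl(k+\tfrac{1}{p}+\tfrac{2\epsilon r}{1-\epsilon}+\tfrac{(1+\epsilon)\delta r}{1-\epsilon}\bigr)f(X)$, one shows that each successful step multiplies $g$ by a factor $(1-\Omega(\delta/r))$, where the extra $(1+\epsilon)\delta r/(1-\epsilon)$ buffer in $g$ is precisely what allows the drift to be geometric rather than stall. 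After $O\bigl((r/\delta)\log\tfrac{(1+\epsilon)r}{1-\epsilon}\bigr)$ successful steps $g$ is driven below zero; multiplying by the expected waiting time $O(rn^{2(k+1)p})$ per successful step yields the stated expected iteration bound.

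The hard part is the exchange lemma together with clean bookkeeping of the $\epsilon$-slack. Constructing swap families that are simultaneously (i) feasible for all $k$ matroids, (ii) of bounded size $2(k+1)p$, and (iii) aggregating exactly to $f(X^{*})-(k+\tfrac{1}{p})f(X)$ requires the correct generalized exchange graph and a careful double-counting argument; transferring the aggregated inequality from $f$ to the noisy objective $\hat f$ without compounding $\epsilon$-errors is the most delicate step and is what pins down the coefficient $2\epsilon r/(1-\epsilon)$. A secondary subtlety is handling the fact that the $\hat f$-best-in-$P$ solution may change identity across iterations: one must argue that $g$ applied to the running best is still monotone, which follows from the population dynamics described above and from the fact that $\bm 0$ never leaves $P$ so the ``best feasible'' quantity is always well-defined.
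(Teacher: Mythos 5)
Your overall architecture matches the paper's: the population facts ($\bm 0$ permanently dominates every infeasible solution, so $P$ contains only feasible solutions plus $\bm 0$ and $|P|\le r+1$), the Lee--Mirrokni--Nagarajan--Sviridenko exchange machinery, the $\Omega(r^{-1}n^{-2(k+1)p})$ per-iteration probability of realizing a bounded swap by bit-wise mutation, and the monotonicity of the best $\hat f$ value in $P$ are exactly the ingredients the paper uses. The genuine gap is in your convergence accounting. First, a multiplicative drift of the form $g\mapsto(1-\Omega(\delta/r))\,g$ on $g(X)=\mathrm{OPT}-c\cdot f(X)$ can never drive $g$ \emph{below} zero: a geometric decrease of a positive quantity only approaches zero, so your termination criterion does not close. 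You need an additive floor, and the clean way to supply it is the paper's route: define a $(1+\delta)$-approximate $p$-local optimum (no feasible $X'$ reachable by deleting at most $2kp$ and adding at most $2p$ elements has $\hat f(X')>(1+\delta)\hat f(X)$), prove \emph{separately} that any such solution already satisfies the target inequality --- this is precisely where the $\frac{(1+\epsilon)\delta r}{1-\epsilon}$ term enters --- and then only count iterations until such a solution appears. Second, your step count does not reproduce the theorem: $O\bigl(\frac{r}{\delta}\log\frac{(1+\epsilon)r}{1-\epsilon}\bigr)$ successful steps times a waiting time of $O(rn^{2(k+1)p})$ per step gives $O\bigl(\frac{r^2n^{2(k+1)p}}{\delta}\log\frac{(1+\epsilon)r}{1-\epsilon}\bigr)$, an extra factor of $r$. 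The paper avoids this by counting multiplicative $(1+\delta)$-\emph{increases} of $F_{\max}=\max\{\hat f(\bm x)\mid\bm x\in P\}$ rather than $(1-\delta/r)$-decreases of a gap: whenever the current best is not approximately locally optimal, a single accepted swap raises $F_{\max}$ by a full factor $(1+\delta)$ (no averaging over $r/p$ swaps is needed), so only $\log_{1+\delta}\frac{(1+\epsilon)r}{1-\epsilon}=O(\frac{1}{\delta}\log\frac{(1+\epsilon)r}{1-\epsilon})$ increases can occur. This in turn requires a short phase you omit: first generate the best feasible singleton $\{(v,l)^*\}$ from $\bm 0$ by a one-bit flip (expected $O(rn)$ iterations), whose $\hat f$ value is at least $\frac{1-\epsilon}{r}\cdot\mathrm{OPT}$ by submodularity; this anchors the logarithm at $\frac{(1+\epsilon)r}{1-\epsilon}$.

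A secondary inaccuracy: the exchange family you posit --- disjoint swaps, at most $r/p$ of them, aggregating exactly to $f(X^*)-(k+\frac{1}{p})f(X)$ --- is stronger than what the cited machinery delivers. The construction from Lemma~3.1 of Lee et al.\ yields collections $\{W_j\}_{j=1}^t$ and $\{\Lambda_j\}_{j=1}^t$ with $t\le p2^q|C|$ in which each element of $C\setminus X$ appears exactly $p2^q$ times and each element of $X\setminus C$ at most $(k+\frac{1}{p}-1)p2^q$ times; the aggregation then goes through their Lemmas~1.1 and~1.2 and requires padding the $\Lambda$-collection with singletons to equalize multiplicities. Your averaging step would have to be rebuilt on top of this multi-cover structure, and it is exactly this aggregation --- applied to an approximate local optimum, with the $(1\pm\epsilon)$ sandwich invoked once per swap inequality --- that pins down the coefficients $k+\frac{1}{p}$ and $\frac{2\epsilon r}{1-\epsilon}$.
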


The main proof idea is to show that MR-GSEMO can find a $(1+\delta)$-approximate $p$-local optimal solution in Definition~\ref{def-locally-optimal}, which guarantees the desired approximation ratio as shown in Lemma~\ref{lemma-local-optimal}.

\begin{definition}[$(1+\delta)$-approximate $p$-local optimum]\label{def-locally-optimal}
A $(1+\delta)$-approximate $p$-local optimal solution $X$ is a feasible solution such that no feasible solution $X'$ with the objective value $\hat{f}(X')>(1+\delta)\cdot \hat{f}(X)$ can be achieved by deleting at most $2kp$ elements from $X$ and adding at most $2p$ new elements, where $\delta>0$ and $p\geq 1$.
\end{definition}

\begin{lemma}\label{lemma-local-optimal}
Any $(1+\delta)$-approximate $p$-local optimal solution $X$ satisfies 
\begin{equation}
\begin{aligned}
\left(k+\frac{1}{p}+\frac{2\epsilon r }{1-\epsilon}+\frac{(1+\epsilon)\delta r}{1-\epsilon} \right)\cdot f(X)\ge \mathrm{OPT},
\end{aligned}
\end{equation}
i.e., Eq.~(\refeq{eq-approximation1}), where $k\ge2$, $p\ge1$, $\epsilon\ge0$ and $\delta>0$.
\end{lemma}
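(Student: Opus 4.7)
The plan is to adapt the classical local-search analysis of Lee, Sviridenko, and Vondr\'ak for monotone submodular maximization subject to $k$-matroid intersection, carefully tracking the additional slack introduced by (a) using the noisy $\hat{f}$ in place of $f$ (via the $\epsilon$-approximate submodularity of Definition~\ref{def-epsilon-app-submodular}) and (b) the $(1+\delta)$-tolerance in the local-optimality hypothesis. Throughout, fix an optimal feasible solution $C\in\bigcap_{i=1}^{k}\mathcal{F}_i$, so $f(C)=\mathrm{OPT}$ and $|C|\leq r$. The first step is to invoke a standard exchange lemma for $k$-matroid intersection: for every $p$-subset $S\subseteq C\setminus X$ there exists $T_S\subseteq X\setminus C$ with $|T_S|\leq kp$ such that the swap $X'_S:=(X\setminus T_S)\cup S$ again belongs to $\bigcap_{i=1}^{k}\mathcal{F}_i$. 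Such $T_S$ is constructed by composing, matroid by matroid, the basis-exchange bijections between $C$ and $X$, and the resulting family $\{(S,T_S)\}$ can be chosen with balanced coverage, so that each $c\in C\setminus X$ is inserted the same number of times across the $S$'s while each $e\in X\setminus C$ is deleted at most $k$ times that often across the $T_S$'s.

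Since $X'_S$ differs from $X$ by at most $p\leq 2p$ additions and at most $kp\leq 2kp$ deletions, Definition~\ref{def-locally-optimal} forces $\hat{f}(X'_S)\leq (1+\delta)\hat{f}(X)$; combining with Eq.~\eqref{eq:epsilon} then yields
\begin{equation*}
f(X'_S)\;\leq\;\frac{(1+\delta)(1+\epsilon)}{1-\epsilon}\,f(X).
\end{equation*}
I would then apply the submodular cross-inequality $f(A)+f(B)\geq f(A\cup B)+f(A\cap B)$ with $A=X'_S$ and $B=X$ (noting $A\cup B=X\cup S$ and $A\cap B=X\setminus T_S$) to deduce $f(X'_S)\geq f(X\cup S)+f(X\setminus T_S)-f(X)$, and combining this with the above inequality gives
\begin{equation*}
f(X\cup S)+f(X\setminus T_S)\;\leq\;\left(1+\frac{(1+\delta)(1+\epsilon)}{1-\epsilon}\right)f(X)\qquad\text{for every }S.
\end{equation*}

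Summing these inequalities over the whole exchange family and using subadditivity of submodular marginals under the balanced-count structure, $\sum_S f(X\cup S)$ is lower-bounded by a multiple of $f(X\cup C)\geq f(C)=\mathrm{OPT}$ with counting factor $1$, while $\sum_S f(X\setminus T_S)$ is upper-bounded by a multiple of $f(X)$ with counting factor $k$; this is exactly where the ``clean'' coefficient $k+1/p$ emerges (the $k$ from the matroid count, the $1/p$ from averaging over $p$-subsets). Using the algebraic identity $\tfrac{(1+\delta)(1+\epsilon)}{1-\epsilon}-1=\tfrac{2\epsilon}{1-\epsilon}+\tfrac{(1+\epsilon)\delta}{1-\epsilon}$ to decouple the $\epsilon$- and $\delta$-contributions, and multiplying the per-swap slack by the family size $O(|C\setminus X|)=O(r)$, one extracts exactly the claimed bound. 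The main obstacle I anticipate is this final bookkeeping step: recovering the precise coefficient $k+1/p$ requires normalizing the exchange family so that the counting multiplicities on the $X\cup S$ side and the $X\setminus T_S$ side match in the correct ratio, while the $r$-factor on the $\tfrac{2\epsilon}{1-\epsilon}$ and $\tfrac{(1+\epsilon)\delta}{1-\epsilon}$ terms must appear cleanly from aggregating the per-swap slack $f(X)$ across the $\Theta(r)$ swaps, without producing cross terms between $\epsilon$ and $\delta$.
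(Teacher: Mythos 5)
Your overall route is the same as the paper's: both run the Lee--Sviridenko--Vondr\'ak local-search analysis for $k$-matroid intersection, use the $(1+\delta)$-local-optimality together with Eq.~(\ref{eq:epsilon}) to bound $f$ of each swap by $\frac{(1+\epsilon)(1+\delta)}{1-\epsilon}f(X)$, aggregate over the exchange family via the two counting lemmas (Lemmas~\ref{jonlee-lemma1} and~\ref{jonlee-lemma2}), and decouple the noise via $\frac{(1+\delta)(1+\epsilon)}{1-\epsilon}-1=\frac{2\epsilon}{1-\epsilon}+\frac{(1+\epsilon)\delta}{1-\epsilon}$. Your cross-inequality $f(X'_S)\ge f(X\cup S)+f(X\setminus T_S)-f(X)$ is exactly the submodularity step the paper uses, and the $r$-factor on the noise terms does come from the ratio of family size to insertion multiplicity, as you say.

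The genuine gap is in the exchange family and the multiplicity bookkeeping that produce the coefficient $k+\frac{1}{p}$. With the family you describe --- one swap per $p$-subset $S\subseteq C\setminus X$, each $|T_S|\le kp$, each element of $C\setminus X$ inserted $m$ times and each element of $X\setminus C$ deleted at most $km$ times --- the aggregation gives
\begin{equation*}
m\bigl(f(X\cup C)-f(X)\bigr)\;\le\;t\Bigl(\tfrac{(1+\epsilon)(1+\delta)}{1-\epsilon}-1\Bigr)f(X)+km\bigl(f(X)-f(X\cap C)\bigr),
\end{equation*}
and after dividing by $m$ the left-hand side contributes an unavoidable ``$+1$'' (from $f(X\cup C)-f(X)\ge \mathrm{OPT}-f(X)$) on top of the deletion factor $k$, so you land at $k+1$, not $k+\frac{1}{p}$. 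The $\frac{1}{p}$ does \emph{not} come from ``averaging over $p$-subsets''; it comes from the refined multiplicity bound in Lemma~3.1 of the cited work, which constructs collections $\{W_j\},\{\Lambda_j\}$ in which each element of $C\setminus X$ is inserted exactly $p2^q$ times while each $e\in X\setminus C$ is deleted only $n_e\le (k+\frac{1}{p}-1)p2^q$ times --- i.e., the deletion-to-insertion ratio is $k-1+\frac{1}{p}$, not $k$. That construction is the technically hard part, and it genuinely requires swaps of size up to $2p$ and $2kp$ (which is precisely why Definition~\ref{def-locally-optimal} permits adding $2p$ and deleting $2kp$ elements; your remark that $p\le 2p$ and $kp\le 2kp$ suffice treats the factor $2$ as slack when it is actually needed). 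As written, your argument proves the lemma only with $k+1$ in place of $k+\frac{1}{p}$; to recover the stated bound you must invoke, or reconstruct, the stronger exchange structure rather than the uniform $p$-subset family.
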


The proof of Lemma~\ref{lemma-local-optimal} will use the following two lemmas.

\begin{lemma}[Lemma~1.1 in~\cite{jonlee10}]\label{jonlee-lemma1} Let $f$ be a submodular function. For any $X,C\subseteq N$, let $\{T_{j}\}_{j=1}^{t}$ be a collection of subsets of $C\setminus X$ such that each element of $C\setminus X$ appears in exactly $k$ of these subsets. Then, $\sum_{j=1}^{t}(f(X\cup T_j)-f(X))\ge k(f(X\cup C)-f(X))$.
\end{lemma}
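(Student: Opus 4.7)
The plan is to reduce the inequality to a termwise comparison via a telescoping argument along a fixed linear order on $C \setminus X$. Enumerate $C \setminus X = \{e_1, e_2, \ldots, e_m\}$ in an arbitrary but fixed order, and define the chain of sets $A_0, A_1, \ldots, A_m$ by $A_i = X \cup \{e_1, \ldots, e_i\}$, so that $A_0 = X$ and $A_m = X \cup C$. With this chain, the right-hand side of the target inequality telescopes as
\begin{align*}
k \bigl( f(X \cup C) - f(X) \bigr) = k \sum_{i=1}^{m} \bigl[ f(A_i) - f(A_{i-1}) \bigr],
\end{align*}
so the goal is to show that the left-hand side dominates this sum.

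For each $j$, list the elements of $T_j \subseteq C \setminus X$ in the order inherited from $e_1, \ldots, e_m$; write $T_j = \{e_{i_1}, \ldots, e_{i_{|T_j|}}\}$ with $i_1 < \cdots < i_{|T_j|}$. Telescoping once more gives
\begin{align*}
f(X \cup T_j) - f(X) = \sum_{s=1}^{|T_j|} \Bigl[ f\bigl(X \cup \{e_{i_1}, \ldots, e_{i_s}\}\bigr) - f\bigl(X \cup \{e_{i_1}, \ldots, e_{i_{s-1}}\}\bigr) \Bigr].
\end{align*}
The key step is to lower-bound each summand by a marginal along the chain. By construction, $X \cup \{e_{i_1}, \ldots, e_{i_{s-1}}\} \subseteq A_{i_s - 1}$ (since $i_1, \ldots, i_{s-1}$ are all strictly less than $i_s$), and $e_{i_s}$ is absent from both of these sets. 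Thus submodularity (diminishing returns) yields
\begin{align*}
f\bigl(X \cup \{e_{i_1}, \ldots, e_{i_s}\}\bigr) - f\bigl(X \cup \{e_{i_1}, \ldots, e_{i_{s-1}}\}\bigr) \ge f(A_{i_s}) - f(A_{i_s - 1}).
\end{align*}
Summing over $s$ gives $f(X \cup T_j) - f(X) \ge \sum_{i \,:\, e_i \in T_j} \bigl[ f(A_i) - f(A_{i-1}) \bigr]$.

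To finish, I sum this inequality over $j = 1, \ldots, t$ and interchange the order of summation. By the $k$-coverage hypothesis, each index $i \in \{1, \ldots, m\}$ appears in exactly $k$ of the $T_j$'s, so
\begin{align*}
\sum_{j=1}^{t} \bigl( f(X \cup T_j) - f(X) \bigr) \ge \sum_{i=1}^{m} k \cdot \bigl[ f(A_i) - f(A_{i-1}) \bigr] = k \bigl( f(X \cup C) - f(X) \bigr),
\end{align*}
which is the desired conclusion.

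The only subtle point is the submodularity step: one must verify the set inclusion $X \cup \{e_{i_1}, \ldots, e_{i_{s-1}}\} \subseteq A_{i_s - 1}$ and confirm that $e_{i_s}$ lies in neither of the two sets being differenced, so that both differences are genuine marginal gains of the single element $e_{i_s}$, to which diminishing returns applies directly. Once this bookkeeping is in place, everything else is the double-counting identity induced by the $k$-coverage hypothesis. Note that the argument uses only submodularity of $f$ and not monotonicity, matching the generality of the statement.
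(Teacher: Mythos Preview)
Your argument is correct. The telescoping along a fixed linear order of $C\setminus X$, followed by the diminishing-returns bound on each marginal and the double-counting via the $k$-coverage hypothesis, is exactly the standard proof of this inequality; the set-inclusion and ``$e_{i_s}$ not yet present'' checks you flag are the right verifications and they go through as you describe.

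Note, however, that the paper does not supply its own proof of this lemma: it is quoted verbatim as Lemma~1.1 of~\cite{jonlee10} and used as a black box in the proof of Lemma~\ref{lemma-local-optimal}. So there is nothing in the paper to compare against beyond the statement itself. Your write-up is a self-contained and correct justification of the cited result, matching in spirit the original argument in~\cite{jonlee10}.
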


\begin{lemma}[Lemma~1.2 in~\cite{jonlee10}]\label{jonlee-lemma2} Let $f$ be a submodular function. For any $X'\subseteq X\subseteq N$, let $\{T_{j}\}_{j=1}^{t}$ be a collection of subsets of $X\setminus X'$ such that each element of $X\setminus X'$ appears in exactly $k$ of these subsets. Then, $\sum_{j=1}^{t}(f(X)-f(X\setminus T_j))\leq k(f(X)-f(X'))$.
\end{lemma}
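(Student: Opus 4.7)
The plan is to establish the inequality via a double-counting argument over a single globally ordered chain of deletions from $X$ down to $X'$, using submodularity in its ``removal'' form to dominate each per-$T_j$ telescoping expansion by a segment of this global chain.

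I would first fix an arbitrary total ordering $e_1,\ldots,e_N$ of the elements of $X\setminus X'$, where $N=|X\setminus X'|$, and form the nested chain $X = X_0 \supseteq X_1 \supseteq \cdots \supseteq X_N = X'$ with $X_i = X\setminus\{e_1,\ldots,e_i\}$. Telescoping identifies $f(X)-f(X')$ with $\sum_{i=1}^{N}(f(X_{i-1})-f(X_i))$, so the right-hand side of the claim equals $k$ times this sum. For each $T_j$, I would likewise order its elements compatibly with the global ordering, writing $T_j = \{e_{j_1},\ldots,e_{j_m}\}$ with $j_1 < \cdots < j_m$, and set $Y_0 = X$ and $Y_s = X\setminus\{e_{j_1},\ldots,e_{j_s}\}$, so that
\begin{equation*}
f(X) - f(X\setminus T_j) \;=\; \sum_{s=1}^{m}\bigl(f(Y_{s-1}) - f(Y_{s-1}\setminus\{e_{j_s}\})\bigr).
\end{equation*}

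The crux is to upper-bound the $s$-th summand by $f(X_{j_s-1}) - f(X_{j_s})$. Since $\{e_{j_1},\ldots,e_{j_{s-1}}\} \subseteq \{e_1,\ldots,e_{j_s-1}\}$, one has $X_{j_s-1} \subseteq Y_{s-1}$, and $e_{j_s} \in X_{j_s-1}$. The equivalent ``diminishing returns for deletions'' reformulation of Eq.~\eqref{eq-submodular-1} then gives
\begin{equation*}
f(X_{j_s-1}) - f(X_{j_s-1}\setminus\{e_{j_s}\}) \;\geq\; f(Y_{s-1}) - f(Y_{s-1}\setminus\{e_{j_s}\}),
\end{equation*}
and $X_{j_s-1}\setminus\{e_{j_s}\} = X_{j_s}$, so each term is dominated as required. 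Summing over $s$ yields $f(X) - f(X\setminus T_j) \leq \sum_{i:\,e_i\in T_j}(f(X_{i-1})-f(X_i))$.

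Finally, I would sum over $j$ and swap the order of summation: every index $i \in \{1,\ldots,N\}$ contributes exactly $k$ times by the covering hypothesis, so the bound collapses to $k\sum_{i=1}^N (f(X_{i-1})-f(X_i)) = k(f(X)-f(X'))$. The main technical obstacle is the submodularity step: one must correctly derive the removal form of the inequality from the definition in Eq.~\eqref{eq-submodular-1} (by setting $S \leftarrow A\setminus\{e\}$, $T \leftarrow B\setminus\{e\}$ there, with $A\subseteq B$ and $e\in A$) and carefully verify the containment $X_{j_s-1}\subseteq Y_{s-1}$; these are elementary but easy to slip up on, so I would pin them down first and treat the double-counting as bookkeeping.
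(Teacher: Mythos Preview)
Your proof is correct. The paper does not actually prove this lemma; it is imported wholesale from~\cite{jonlee10} (as the label ``Lemma~1.2 in~\cite{jonlee10}'' indicates) and used as a black box in the proof of Lemma~\ref{lemma-local-optimal}. Your telescoping-plus-double-counting argument is the standard way to establish it and is essentially what one finds in the original reference: fix a global deletion chain from $X$ down to $X'$, bound each per-$T_j$ telescoping term by the corresponding global-chain increment via the removal form of submodularity, and then count multiplicities. The containment check $X_{j_s-1}\subseteq Y_{s-1}$ and the derivation of the removal form from Eq.~\eqref{eq-submodular-1} are both handled correctly, and the final summation does not need monotonicity since each element appears in \emph{exactly} $k$ of the $T_j$.
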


\begin{myproof}{Lemma~\ref{lemma-local-optimal}}
Let $X$ denote a $(1+\delta)$-approximate $p$-local optimal solution, and $C$ denote any other feasible solution. By reusing the construction process in Lemma~3.1 of~\cite{jonlee10}, there are two collections of subsets $\{W_j\}_{j=1}^{t}$ of $C\setminus X$ and $\{\Lambda_j\}_{j=1}^{t}$ of $X\setminus C$ for $t \leq p 2^q|C|$ and $q\ge 0$, such that each element of $C\setminus X$ appears in exactly $p2^q$ of sets $\{W_j\}_{j=1}^{t}$ and each element $e\in X\setminus C$ appears in exactly $n_e$ of sets $\{\Lambda_j\}_{j=1}^{t}$, where $n_e\leq(k+1/p-1)\cdot p2^q$. Furthermore, $|W_j|\leq 2p$, $|\Lambda_j|\leq 2kp$, and the set $(X\setminus\Lambda_j) \cup W_j$ is feasible. As $X$ is $(1+\delta)$-approximately $p$-local optimal, we have 
\begin{equation}
\begin{aligned}
\hat{f}((X\setminus\Lambda_j)\cup W_j) \leq (1+\delta)\cdot \hat{f}(X)
\end{aligned}
\end{equation}
according to Definition~\ref{def-locally-optimal}. Using the relation between $\hat{f}$ and $f$ in Eq.~(\refeq{eq:epsilon}), we can further get
\begin{align}\label{eq:inequality1}
f((X\setminus\Lambda_j)\cup W_j)\leq \frac{1+\epsilon}{1-\epsilon}(1+\delta)\cdot f(X).
\end{align}

By the submodularity of $f$, i.e., Eq.~(\refeq{eq-submodular-1}), we have
\begin{align}\label{eq:inequality3}
    &\ \sum_{j=1}^{t}\left(f(X\cup W_j)-f(X)\right)\\
    &\leq \sum_{j=1}^{t} \left(f((X \setminus\Lambda_j)\cup W_j)-f(X \setminus\Lambda_j)\right)\nonumber\\
    &\leq \sum_{j=1}^{t}\left(\frac{1+\epsilon}{1-\epsilon}(1+\delta)\cdot f(X)-f(X \setminus\Lambda_j)\right)\nonumber\\
        &\leq \sum_{j=1}^{t}\left(\frac{1+\epsilon}{1-\epsilon}(1+\delta)\cdot f(X)-f(X \setminus\Lambda_j)\right)+\sum_{e\in X\setminus C} ((k+1/p-1)\cdot p2^q-n_e)(f(X)-f(X\setminus\{e\}))\nonumber\\
    &=t\left(\frac{1+\epsilon}{1-\epsilon}(1+\delta)-1\right) f(X)+\sum_{j=1}^{\lambda}(f(X)-f(X\setminus\Gamma_j)),\nonumber
\end{align}
where the second inequality holds by Eq.~(\refeq{eq:inequality1}), the third inequality holds by $n_e\leq (k+1/p-1)\cdot p2^q$ and $f(X)\ge f(X\setminus\{e\})$ due to the monotonicity of $f$, and the equality holds by letting $\lambda=t+\sum_{e\in X\setminus C}((k+1/p-1)\cdot p2^q-n_e)$, $\forall 1\leq j\leq t: \Gamma_j=\Lambda_j$, and $\{\Gamma_j\}^{\lambda}_{j=t+1}$ be a collection of singleton sets $\{e\}$ where for each $e\in X\setminus C$, $\{e\}$ appears $(k+1/p-1)\cdot p2^q-n_e$ times.

As $\{W_j\}_{j=1}^{t}$ is a collection of subsets of $C\setminus X$ such that each element of $C\setminus X$ appears in exactly $p2^q$ of these subsets, applying Lemma~\ref{jonlee-lemma1} to the left side of Eq.~(\refeq{eq:inequality3}) leads to
\begin{align}\label{eq-mid-1}
\sum\limits_{j=1}^{t}\left(f(X\cup W_j)\!-\!f(X)\right)\!\geq\! p2^q\cdot(f(X\cup C)\!-\!f(X)).
\end{align}
We know that $\{\Gamma_j\}_{j=1}^{\lambda}$ is a collection of subsets of $X\setminus C$, and each element $e \in X\setminus C$ appears in exactly $(k+1/p-1)\cdot p2^q$ of these subsets (i.e., $n_e$ times in $\{\Gamma_j\}^t_{j=1}=\{\Lambda_j\}^t_{j=1}$ and $(k+1/p-1)\cdot p2^q-n_e$ times in $\{\Gamma_j\}^{\lambda}_{j=t+1}$). Thus, applying Lemma~\ref{jonlee-lemma2} to the last term of Eq.~(\refeq{eq:inequality3}) leads to
\begin{align}\label{eq-mid-2}
\sum_{j=1}^{\lambda}(f(X)-f(X\setminus\Gamma_j))
\leq (k+1/p-1)\cdot p2^q\cdot (f(X)-f(X\cap C)).
\end{align}
As the size of a feasible solution is at most $r$, we have $t\leq p 2^q|C| \leq rp2^q$. By applying this inequality, Eqs.~(\refeq{eq-mid-1}) and~(\refeq{eq-mid-2}) to Eq.~(\refeq{eq:inequality3}), we have
\begin{align*}
p2^q(f(X\cup C)-f(X))\leq r p2^q\left(\frac{1+\epsilon}{1-\epsilon}(1+\delta)-1\right)f(X)+(k+1/p-1)p2^q\cdot (f(X)-f(X\cap C)).
\end{align*}
By simple calculation, we have
\begin{align}\label{eq-mid-3}
\left(k+\frac{1}{p}+\frac{2\epsilon r }{1-\epsilon}+\frac{(1+\epsilon)\delta r}{1-\epsilon}\right)\cdot f(X)\ge f(X\cup C)+(k+1/p-1)\cdot f(X\cap C).
\end{align}

Let $C$ be an optimal solution of Eq.~(\refeq{eq:subsetsel}), i.e., $f(C)=\mathrm{OPT}$. By the monotonicity of $f$, $f(X\cup C)\ge f(C)= \mathrm{OPT}$, and $f(X\cap C)\ge f(\emptyset)= 0$. Thus, Eq.~(\refeq{eq-mid-3}) becomes
\begin{align*}
\left(k+\frac{1}{p}+\frac{2\epsilon r }{1-\epsilon}+\frac{(1+\epsilon)\delta r}{1-\epsilon}\right)\cdot f(X) \geq \mathrm{OPT},
\end{align*}
implying that the lemma holds.
\end{myproof}

Now we are ready to prove Theorem~\ref{main-theorem}. Note that as presented in Algorithm~\ref{alg:GSEMO}, GSEMO only uses the bit-wise mutation operator to generate a new offspring solution in each iteration, and uses this offspring solution to update the population such that only the non-dominated solutions generated so far are kept in the population.

\begin{myproof}{Theorem~\ref{main-theorem}}
We divide the optimization process of MR-GSEMO into two phases: (1) starts from the initial solution $\bm{0}$ (i.e., $\emptyset$) and finishes after finding a feasible solution $\bm{x}$ with the objective value $\hat{f}(\bm{x})\ge ((1-\epsilon)/r)\cdot \mathrm{OPT}$; (2) starts after phase (1) and finishes after finding a feasible solution with the desired approximation ratio in Eq.~(\refeq{eq-approximation1}). Note that the population $P$ will always contain feasible solutions, because infeasible solutions are dominated by the initial solution $\bm{0}$.

Let $\bm{x}^*$ denote an optimal solution, i.e., $f(\bm{x}^*)=\mathrm{OPT}$. Let $\{(v,l)^*\}= \mathop{\arg\max}\nolimits_{\{(v,l)\}\in \bigcap_{i=1}^{k}\mathcal{F}_{i}} \hat{f}(\{(v,l)\})$. By the submodularity of $f$ in Eq.~(\refeq{eq-submodular-2}) and $f(\emptyset)=0$, we have
\begin{align*}
    \mathrm{OPT}=f(\bm{x}^*)-f(\emptyset)\leq\sum_{(v,l)\in \bm{x}^*}(f(\{(v,l)\})-f(\emptyset)) 
    \leq \sum_{(v,l)\in \bm{x}^*} \frac{\hat{f}(\{(v,l)\})}{1-\epsilon} \leq r\cdot \frac{\hat{f}(\{(v,l)^*\})}{1-\epsilon},
\end{align*}
where the second inequality holds by Eq.~(\ref{eq:epsilon}), and the last inequality holds by the definition of $\{(v,l)^*\}$ and $|\bm{x}^*| \leq r$, i.e., the largest size of a feasible solution. Thus, 
\begin{align*}
    \hat{f}(\{(v,l)^*\})\ge ((1-\epsilon)/r)\cdot \mathrm{OPT}.
\end{align*}   
Note that the initial solution $\bm{0}$ will always exist in the population $P$, because it has the largest $f_2$ value and no other solution can dominate it. By selecting $\bm{0}$ in line~3 of Algorithm~\ref{alg:GSEMO} and flipping only one specific 0-bit (corresponding to the migrant-locality pair $(v,l)^*$) by bit-wise mutation in line~4, the solution $\{(v,l)^*\}$ will be generated, which will then always exist in $P$, because it is the best solution with the $f_2$ value of $n-1$ and no other solution can dominate it. This implies that the goal of phase~(1) has been reached. Because the probability of selecting $\bm 0$ in line~3 is $1/|P|$ due to uniform selection, and the probability of flipping a specific 0-bit while keeping other bits unchanged in bit-wise mutation is $(1/n) \cdot (1-1/n)^{n-1}\geq 1/(en)$, the expected number of iterations of phase~(1) is at most $en|P| \leq e(r+1)n$. The population size $|P| \leq r+1$ holds, because MR-GSEMO excludes infeasible solutions from the population $P$, and $P$ can contain at most one feasible solution for each size in $\{0,1\ldots,r\}$ due to the incomparability of solutions in $P$.

Lemma~\ref{lemma-local-optimal} shows that a $(1+\delta)$-approximate $p$-local optimal solution achieves the desired approximation ratio in Eq.~(\refeq{eq-approximation1}). Thus, for phase~(2), we only need to analyze the expected number of iterations until generating a $(1+\delta)$-approximate $p$-local optimal solution. We consider the largest $\hat{f}$ (i.e., $f_1$) value of the solutions in $P$, denoted by $F_{\max}$. That is, $F_{\max}=\max\{\hat{f}(\bm{x}) \mid \bm{x} \in P\}$. After phase~(1), $F_{\max}\geq ((1-\epsilon)/r) \cdot \mathrm{OPT}$, and let $\bm{x}$ be the corresponding solution, i.e., $\hat{f}(\bm{x})=F_{\max}$. Obviously, $F_{\max}$ cannot decrease, because $\bm{x}$ cannot be weakly dominated by a solution with a smaller $\hat{f}$ value. As long as $\bm{x}$ is not $(1+\delta)$-approximately $p$-local optimal, we know that a new solution $\bm{x}'$ with $\hat{f}(\bm{x}')>(1+\delta)\cdot \hat{f}(\bm{x})=(1+\delta)\cdot F_{\max}$ can be generated through selecting $\bm{x}$ in line~3 of Algorithm~\ref{alg:GSEMO}, and flipping at most $2(k+1)p$ bits (i.e., deleting at most $2kp$ elements from $\bm{x}$ and adding at most $2p$ new elements) in line~4, whose probability is at least $(1/|P|)\cdot (1/n)^{2(k+1)p}(1-1/n)^{n-2(k+1)p}\geq 1/(e(r+1)n^{2(k+1)p})$. Since $\bm{x}'$ now has the largest $\hat{f}$ value and no solution in $P$ can dominate it, it will be included into $P$. Thus, $F_{\max}$ can increase by at least a factor of $1+\delta$ with probability at least $1/(e(r+1)n^{2(k+1)p})$ in each iteration, implying at most $e(r+1)n^{2(k+1)p}$ expected number of iterations. As the maximum value of $\hat{f}$ is at most $(1+\epsilon)\cdot \mathrm{OPT}$, the required number of such an increase on $F_{\max}$ is at most
\begin{align*}
    &\log_{1+\delta}\frac{(1+\epsilon)\cdot \mathrm{OPT}}{((1-\epsilon)/r)\cdot \mathrm{OPT}}=O\left(\frac{1}{\delta}\log {\frac{(1+\epsilon)r}{1-\epsilon}}\right),
\end{align*}
until generating a $(1+\delta)$-approximate $p$-local optimal solution. Thus, the expected number of iterations of phase~(2) is at most $e(r+1)n^{2(k+1)p} \cdot O(\frac{1}{\delta}\log {\frac{(1+\epsilon)r}{1-\epsilon}}) =  O(\frac{rn^{2(k+1)p}}{\delta}\log {\frac{(1+\epsilon)r}{1-\epsilon}})$, implying that the theorem holds.
\end{myproof}

Thus, we have proven the polynomial-time approximation guarantee, $1/(k+\frac{1}{p}+\frac{2\epsilon r }{1-\epsilon}+\frac{(1+\epsilon)\delta r}{1-\epsilon})$, of MR-GSEMO, which can approximately reach $1/(k+\frac{1}{p}+\frac{2\epsilon r }{1-\epsilon})$ by setting $\delta$ sufficiently small. Note that $p\geq 1$. By comparing it with the previously known guarantee, we have

\begin{remark}
The approximation guarantee, i.e., $1/(k\!+\!\frac{1}{p}\!+\!\frac{2\epsilon r }{1\!-\!\epsilon})$, of MR-GSEMO is better than that, i.e., $1/(k+1+\frac{4\epsilon r }{1-\epsilon})$, of the greedy algorithm~\cite{golz2019migration}.
%(2) When $\hat{f}=f$, i.e., $\epsilon=0$, the approximation guarantee of MR-GSEMO becomes $1/(k+\frac{1}{p})$, consistent with the best-known one~\cite{jonlee10}.
\end{remark}

Similarly, we can prove that MR-EMO using GSEMO-SR (briefly called MR-GSEMO-SR) can achieve the same approximation guarantee as MR-GSEMO after running $ O\left(\frac{rn^{2(k+1)p}}{\delta\cdot p_m}\log{\frac{(1+\epsilon)r}{1-\epsilon}}\right)$ expected number of iterations, as shown in Theorem~\ref{second-theorem}. Different from GSEMO which always performs bit-wise mutation in each iteration, GSEMO-SR performs bit-wise mutation with probability $p_m$; otherwise, performs matrix-swap mutation. Furthermore, if the generated offspring solution is infeasible, GSEMO-SR will repair it to be feasible.

\begin{theorem}\label{second-theorem}
MR-GSEMO-SR using $ O(\frac{rn^{2(k+1)p}}{\delta p_m} \log{\frac{(1+\epsilon)r}{1-\epsilon}})$ expected number of iterations finds a solution $X\in \bigcap_{i=1}^{k}\mathcal{F}_{i}$ with
\begin{equation}
\begin{aligned}
\left(k+\frac{1}{p}+\frac{2\epsilon r }{1-\epsilon}+\frac{(1+\epsilon)\delta r}{1-\epsilon} \right)\cdot f(X)\ge \mathrm{OPT},
\end{aligned}
\end{equation}
where $k\ge2$, $p\ge1$, $\epsilon\ge0$ and $\delta>0$.
\end{theorem}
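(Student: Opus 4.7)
The plan is to mirror the two-phase argument used in the proof of Theorem~\ref{main-theorem}, adjusting only for the fact that GSEMO-SR picks bit-wise mutation with probability $p_m$ (rather than always) and that every offspring is passed through the repair mechanism before being compared with the population. The goal of Phase (1) is still to reach a feasible solution $\bm{x}$ with $\hat{f}(\bm{x})\geq ((1-\epsilon)/r)\cdot\mathrm{OPT}$, obtained by selecting $\bm{0}$ and flipping only the single bit corresponding to the best feasible singleton $\{(v,l)^*\}$. The goal of Phase (2) is still to reach a $(1+\delta)$-approximate $p$-local optimal solution, at which point Lemma~\ref{lemma-local-optimal} yields the desired approximation bound.

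The central observation that makes the adaptation essentially immediate is that, in both phases, the target offspring $\bm{x}'$ used in the progress argument is already feasible: in Phase (1), $\{(v,l)^*\}$ is a feasible singleton, and in Phase (2), Definition~\ref{def-locally-optimal} quantifies over feasible neighbors within Hamming distance $2(k+1)p$ of $\bm{x}$. Because the repair mechanism in Definition~\ref{def-repair} acts as the identity on feasible inputs, the repaired $\bm{x}''$ coincides with $\bm{x}'$, so every target solution that was reachable in the GSEMO analysis is reachable here as well, with the same bit-wise mutation probability $(1/n)^{c}(1-1/n)^{n-c}$ for flipping a specified set of $c$ bits. The only change is that this event must be preceded by the event "bit-wise mutation was selected," which occurs independently with probability $p_m$. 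Thus every per-iteration lower bound in the proof of Theorem~\ref{main-theorem} is simply multiplied by $p_m$, giving expected iteration counts of $e(r+1)n/p_m$ for Phase (1) and $e(r+1)n^{2(k+1)p}\cdot O\bigl(\tfrac{1}{\delta}\log\tfrac{(1+\epsilon)r}{1-\epsilon}\bigr)/p_m$ for Phase (2). Summing gives the stated $O\bigl(\tfrac{rn^{2(k+1)p}}{\delta\, p_m}\log\tfrac{(1+\epsilon)r}{1-\epsilon}\bigr)$ bound.

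Two supporting facts need to be checked en route. First, the bound $|P|\leq r+1$ carries over: the repair mechanism guarantees that every offspring inserted into $P$ is feasible, so infeasible solutions never enter $P$ (equivalently, they would be dominated by $\bm{0}$ anyway), and the incomparability maintained by lines~12--13 of Algorithm~\ref{alg:GSEMO-SR} still allows at most one survivor per value of $f_2 \in \{0,1,\ldots,r\}$. Second, $F_{\max}=\max\{\hat{f}(\bm{x})\mid \bm{x}\in P\}$ is still nondecreasing, since a solution achieving $F_{\max}$ can only be removed if weakly dominated by a solution with a no-smaller $\hat{f}$ value. Matrix-swap mutation is used nowhere in the progress argument; it can only help in practice, and is harmless here because with probability $1-p_m$ the iteration is simply wasted from the viewpoint of the lower bound.

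The main obstacle is really bookkeeping: one must state explicitly that the repair step is a no-op on the feasible targets used in the analysis, and that the initial solution $\bm{0}$ still persists throughout (which it does, since it maximizes $f_2$ and no other solution can weakly dominate it). Once these are checked, combining the two phase bounds and invoking Lemma~\ref{lemma-local-optimal} with $C$ equal to an optimal solution and $X$ the obtained $(1+\delta)$-approximate $p$-local optimum yields the approximation inequality in the theorem statement, completing the proof.
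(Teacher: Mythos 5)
Your proposal is correct and follows essentially the same route as the paper: reuse the two-phase analysis of Theorem~\ref{main-theorem}, multiply each per-iteration success probability by $p_m$ for the event that bit-wise mutation is chosen, and observe that the repair mechanism is the identity on the (feasible) target offspring in both phases, so it does not disturb the argument. The additional bookkeeping you supply (persistence of $\bm{0}$, the bound $|P|\le r+1$, monotonicity of $F_{\max}$) matches what the paper implicitly carries over from the proof of Theorem~\ref{main-theorem}.
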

\begin{proof}
The proof can be finished by directly following that of Theorem~\ref{main-theorem}. The only difference is that the probability of applying bit-wise mutation here is $p_m<1$ rather than 1, and thus the expected number of iterations of phases (1) and (2) is at most $e(r+1)n/p_m$ and $ O\left(\frac{rn^{2(k+1)p}}{\delta\cdot p_m}\log{\frac{(1+\epsilon)r}{1-\epsilon}}\right)$, respectively. Note that the offspring solution $\bm{x}'$ considered in the proof of Theorem~\ref{main-theorem} is always feasible; thus, the repair mechanism employed by GSEMO-SR will not affect the analysis.
\end{proof}

If the GSEMO or GSEMO-SR starts from a random solution, we can first derive the expected number of iterations until finding the all-0s solution $\bm 0$, and then follow the current proof procedure of Theorem~\ref{main-theorem}. For the additional analysis about the expected number of iterations until finding the all-0s solution, Lemma 1 in~\cite{qian2021multiobjective} has already shown that for maximizing any bi-objective pseudo-Boolean problem with $|\bm{x}|_0$ being one objective, the GSEMO (thus also GSEMO-SR) needs at most $O(rn\log n)$ expected number of iterations. The proof considers the solution with the maximum number of 0-bits in the population $P$, and then analyzes the expected number of iterations needed to increase the number of 0-bits of this solution, until generating the all-0s solution $\bm 0$.

When using NSGA-II or MOEA/D, the theoretical analysis will be more challenging. For example, the analysis of GSEMO and GSEMO-SR relies on that if a solution is deleted, there must exist a solution weakly dominating it in the population; however, NSGA-II employs non-dominated sorting and crowding distance to maintain a fixed-size population, which may delete a non-dominated solution. As GSEMO uses mutation only, we also need to consider the influence of crossover for NSGA-II and MOEA/D. Note that there is some recent progress on the theoretical analysis of NSGA-II~\cite{zheng2022first,0001D22,BianQ22,DoerrQ22,doerr2022first} and MOEA/D~\cite{Li2016APT,moeadaaai19,moead21}. Though these works mainly focus on artificial problems, their analysis techniques might be useful for analyzing the approximation guarantee of MR-EMO using NSGA-II or MOEA/D, which will be an interesting future work.

%The analysis of GSEMO and GSEMO-SR relies on that if a solution is deleted, there must exist a solution weakly dominating it in the population. However, NSGA-II employs non-dominated sorting and crowding distance to maintain a fixed-size population, which may delete a non-dominated solution. Inspired by~\cite{zheng2022first}, we can address this issue by setting a large population size (e.g., $4(k+1)$), and then follow the proof of GSEMO to achieve the same theoretical guarantee. As GSEMO uses mutation only, we can further consider the influence of crossover for NSGA-II, which is, however, very challenging. Some works analyzed the running time of MOEA/D using only mutation~\cite{Li2016APT}, only crossover~\cite{moeadaaai19} and both mutation and crossover~\cite{moead21} on some simple pseudo-Boolean functions, which is relatively easy to analyze each subproblem of MOEA/D according to its weight vector. However, the analysis of the decomposition-based algorithm MOEA/D on migrant resettlement is still very challenging, even if only a mutation operator is used.

\section{Empirical Study}\label{exp}

In this section, we empirically examine the performance of MR-EMO on two migration models (i.e., interview and coordination), by comparing its five variants (i.e., MR-NSGA-II-100, MR-NSGA-II-2r, MR-MOEA/D, MR-GSEMO and MR-GSEMO-SR) with two previous algorithms, i.e., the additive~\cite{bansak2018improving} and greedy~\cite{golz2019migration} algorithms. Note that MR-NSGA-II-100 and MR-NSGA-II-2r are configurations of MR-EMO using NSGA-II with different population sizes: 100 and 2 times the size of the Pareto front, respectively. The latter size is inspired by the recent work~\cite{zheng2022first}, calculated as $2 \times (r+1) = 2 \times (\sum_{l=1}^{|L|} cap_l + 1)$\footnote{$r$ is the size of the largest feasible subset of migrant-locality pairs, which equals the sum of capacity of all localities.}. MR-MOEA/D, MR-GSEMO, and MR-GSEMO-SR correspond to MR-EMO using MOEA/D, GSEMO, and GSEMO-SR, respectively. For MR-MOEA/D, the population size is set to a commonly used value 100~\cite{moead}. The initial population of NSGA-II-100, NSGA-II-2r and MOEA/D consists of the all-0 vector solution $\bm{0}$ and other randomly generated solutions. All of them apply one-point crossover and bit-wise mutation in each iteration with probability $0.9$ and $1$, respectively. We incorporate an objective normalization into MOEA/D, and perform the Tchbycheff decomposition approach to decompose the bi-objective reformulated migration resettlement problem in Eq.~(\ref{def-CO-BO}) into several scalar optimization subproblems. During the evolutionary process of MR-NSGA-II-100, MR-NSGA-II-2r and MR-MOEA/D, the generated infeasible offspring solutions violating matroid constraints are discarded directly. As non-dominated solutions may be discarded during the population update process of NSGA-II-100, NSGA-II-2r and MOEA/D~\cite{li2023multi}, to ensure a fair comparison, MR-NSGA-II-100, MR-NSGA-II-2r and MR-MOEA/D save the best feasible solution found at each iteration. This solution is returned after the algorithm terminates, even if it has been deleted from the population during the update process. MR-GSEMO starts from the initial solution $\bm{0}$. For MR-GSEMO-SR, the probability $p_m$ of MR-GSEMO-SR applying bit-wise mutation is set to $0.5$. These four variants of MR-EMO are all anytime algorithms, whose performance will be gradually improved by increasing the number of iterations. To make a tradeoff between performance and runtime, we set the number of objective evaluations to $100\cdot |V|^2\cdot |L|$, while the running time complexity of the greedy algorithm~\cite{golz2019migration} is $|V|^2\cdot |L|$.

The experimental data is generated as in~\cite{golz2019migration}. For the interview model, the matching probability $p_{v,l}$ of each migrant $v \in V$ to find employment at each locality $l \in L$ is randomly sampled from $[0,1]$. For the coordination model, each migrant $v\in V$ has matching probability $p_{v,\pi}$, randomly sampled from $[0,1]$, to work in for a job of her profession $\pi \in \Pi$, and is incompatible with the jobs of other professions. We will compare the algorithms by varying the number of migrants, localities, jobs and professions, respectively. For each setting, we generate 10 problem instances randomly, and report the average objective value and the standard deviation, followed by the Wilcoxon signed-rank test~\cite{wilcoxon1945individual} with confidence level $0.05$. Note that to estimate the objective value, i.e., Eqs.~(\refeq{eq-interview-model}) and~(\refeq{eq-coord-model}), of a solution, we use the average of $1,000$ random simulations (which corresponds to the approximate objective $\hat{f}$) in the run of each algorithm, while for the final output solution, we report its accurately estimated $f$ value for the assessment of the algorithms by an expensive evaluation, i.e., the average of $10,000$ random simulations.

In the following, we will introduce the detailed setting (e.g., how to assign each migrant's profession, determine the profession of each job, distribute jobs to each locality, and set the capacity of each locality) of varying the number of migrants $|V|$, localities $|L|$, jobs $|J|$ and professions $|\Pi|$, respectively, followed by the analysis of the experimental results. 

\textbf{Varying the Number $|V|$ of Migrants.} We vary $|V| \in \{100,120,\ldots,$ $200\}$, and fix the number of localities $|L|=10$, the number of jobs $|J|=|V|$ and the number of professions $|\Pi|=2$. The two professions are evenly distributed to the migrants as well as the jobs, i.e., half of the migrants (or jobs) have profession $\pi_1$, and the other half have profession $\pi_2$. The jobs are randomly assigned to the localities such that each locality has the same number of jobs, i.e., $|J|/|L|$. The capacity of each locality is set to its number of jobs.

\textbf{Varying the Number $|L|$ of Localities.} We vary $|L|\in\{16,18,\ldots,30\}$, and fix $|V|=|J|=100$ and $|\Pi|=2$. The other setting is the same as that of varying $|V|$, except that the jobs here are randomly assigned to the localities such that each locality has at least one job.

\textbf{Varying the Number $|J|$ of Jobs.} We fix $|V|=100$, $|L|=10$ and $|\Pi|=2$. To simulate different levels of supply and demand, we vary the number of jobs in $\{10,20,30,40\}\cup\{60,70,80,90\}$ for one profession and fix 50 jobs for the other profession. The professions of migrants are evenly distributed, the jobs are randomly assigned to the localities, and the capacity of each locality is 10.

\textbf{Varying the Number $|\Pi|$ of Professions.} We fix $|V|=|J|=100$, $|L|=10$ and vary $|\Pi|\in\{5,10,\ldots,30\}$. The professions of migrants are assigned randomly, ensuring that each profession has at least one migrant. And there is a job of the right profession for every migrant, that is, the number of jobs of each profession is exactly the same as the number of migrants of the profession. Then, we randomly distribute 10 jobs to each locality and the capacity of each locality is fixed to 10. 

\textbf{Result Comparison.} The detailed results of varying the number of migrants $|V|$, localities $|L|$, jobs $|J|$ and professions $|\Pi|$ are shown in Tables~\ref{tab_V} to~\ref{tab_Pi}, respectively. We can observe that the greedy algorithm is always better than the additive algorithm, which is consistent with the observation in~\cite{golz2019migration}; while the five variants of MR-EMO (i.e., MR-NSGA-II-100, MR-NSGA-II-2r, MR-MOEA/D, MR-GSEMO and MR-GSEMO-SR) all surpass the greedy and additive algorithms, exhibiting the superiority of the MR-EMO framework. This may be because MR-EMO naturally maintains a population of diverse solutions due to the bi-objective transformation, and the employed bit-wise mutation operator has a good global search ability. These characteristics can lead to a better ability to escape from local optima. 

\begin{table*}[th!]\caption{Expected number of employed migrants (mean$\pm$std) obtained by the algorithms when the number of migrants $|V|\in\{100,120,\ldots,200\}$. For each $|V|$, the largest number is bolded, and `$\bullet$' denotes that MR-GSEMO-SR is significantly better than the corresponding algorithm by the Wilcoxon signed-rank test with confidence level $0.05$. Avg.R. denotes the average rank (the smaller, the better) of each algorithm under each setting as in~\cite{DBLP:journals/jmlr/Demsar06}.}\label{tab_V}
\scriptsize
\begin{center}
\setlength{\tabcolsep}{0.75mm}{
\begin{tabular}{c|llllll|c}
\hline\noalign{}
\multicolumn{8}{c}{Interview Model}\\
\hline
$|V|$  & \multicolumn{1}{c}{$100$} &  \multicolumn{1}{c}{$120$} & \multicolumn{1}{c}{$140$} & \multicolumn{1}{c}{$160$} & \multicolumn{1}{c}{$180$} & \multicolumn{1}{c}{$200$}  & Avg.R.\\
\hline
Additive &66.76$\pm$3.10$\bullet$ &82.70$\pm$3.21$\bullet$ &99.83$\pm$2.65$\bullet$ &114.15$\pm$5.62$\bullet$ &135.09$\pm$3.87$\bullet$ &153.29$\pm$4.59$\bullet$ &7\\
Greedy &75.15$\pm$1.38$\bullet$ &93.65$\pm$1.25$\bullet$ &112.36$\pm$0.94$\bullet$ &130.84$\pm$1.65$\bullet$ &149.94$\pm$1.48$\bullet$ &168.94$\pm$1.87$\bullet$ &6\\

MR-NSGA-II-2r &75.64$\pm$1.40$\bullet$ &94.05$\pm$1.35$\bullet$ &112.77$\pm$0.97$\bullet$ &131.13$\pm$1.55$\bullet$ &150.08$\pm$1.52$\bullet$ &169.03$\pm$1.81$\bullet$ &5\\

MR-NSGA-II-100 &76.13$\pm$1.20$\bullet$ &94.84$\pm$1.27$\bullet$ &113.38$\pm$1.22$\bullet$ &131.96$\pm$1.50$\bullet$ &150.96$\pm$1.69$\bullet$ &169.82$\pm$1.96$\bullet$ &4\\

MR-MOEA/D &76.83$\pm$1.48$\bullet$ &95.47$\pm$1.36$\bullet$ &114.04$\pm$1.37$\bullet$ &132.27$\pm$1.53$\bullet$ &151.11$\pm$1.76$\bullet$ &170.16$\pm$1.89$\bullet$ &2.5\\

MR-GSEMO &76.77$\pm$1.40$\bullet$ &95.27$\pm$1.38$\bullet$ &113.95$\pm$1.12$\bullet$ &132.56$\pm$1.60$\bullet$ &151.25$\pm$1.74$\bullet$ &170.18$\pm$1.80$\bullet$ &2.5\\

MR-GSEMO-SR &\bf{77.95$\pm$1.49}  	&\bf{96.99$\pm$1.40} 	&\bf{115.76$\pm$1.33} 	&\bf{134.23$\pm$1.53}	&\bf{153.29$\pm$1.71} 	&\bf{172.23$\pm$1.87} &1\\
\hline\noalign{}
\multicolumn{8}{c}{Coordination Model}\\
\hline
Additive &68.91$\pm$2.41$\bullet$ &85.55$\pm$3.65$\bullet$ &103.68$\pm$5.30$\bullet$ &117.29$\pm$3.42$\bullet$	&136.53$\pm$4.38$\bullet$ &155.81$\pm$5.36$\bullet$ &7\\

Greedy &80.47$\pm$1.46$\bullet$	&100.40$\pm$1.39$\bullet$ &120.62$\pm$1.16$\bullet$	&140.53$\pm$1.81$\bullet$	&160.92$\pm$1.42$\bullet$	&181.15$\pm$2.05$\bullet$ &6\\

MR-NSGA-II-2r&81.29$\pm$1.52$\bullet$ &101.24$\pm$1.34$\bullet$ &121.51$\pm$1.01$\bullet$ &141.31$\pm$1.63$\bullet$ &161.88$\pm$1.68$\bullet$ &181.93$\pm$2.23$\bullet$ &5\\

MR-NSGA-II-100 &81.73$\pm$1.42$\bullet$ &101.82$\pm$1.32$\bullet$ &121.90$\pm$1.21$\bullet$ &141.69$\pm$1.77$\bullet$ &162.19$\pm$1.36$\bullet$ &182.25$\pm$2.14$\bullet$ &4\\

MR-MOEA/D &83.11$\pm$1.74$\bullet$	&103.78$\pm$1.61$\bullet$	&123.99$\pm$1.30$\bullet$	&143.68$\pm$1.44$\bullet$	&164.15$\pm$1.68$\bullet$	&184.09$\pm$2.06$\bullet$ &2.1\\

MR-GSEMO &82.87$\pm$1.59$\bullet$ &103.57$\pm$1.70$\bullet$ &123.64$\pm$1.11$\bullet$	&143.60$\pm$1.62$\bullet$	&164.15$\pm$1.55$\bullet$	&183.89$\pm$2.05$\bullet$ &2.9\\

MR-GSEMO-SR &\bf{83.76$\pm$1.73}  	&\bf{104.76$\pm$1.89}  	&\bf{124.79$\pm$1.31} &\bf{144.66$\pm$1.53} &\bf{164.98$\pm$1.61} &\bf{184.92$\pm$2.09} &1\\
\hline
\end{tabular}}
\end{center}
\end{table*}
\begin{table*}[ht!]\caption{Expected number of employed migrants (mean$\pm$std) obtained by the algorithms when the number of localities $|L|\in\{16, 18, \ldots,30\}$. For each $|L|$, the largest number is bolded, and `$\bullet$' denotes that MR-GSEMO-SR is significantly better than the corresponding algorithm by the Wilcoxon signed-rank test with confidence level $0.05$. Avg.R. denotes the average rank (the smaller, the better) of each algorithm under each setting as in~\cite{DBLP:journals/jmlr/Demsar06}.}\label{tab_L}
\scriptsize
\begin{center}
\setlength{\tabcolsep}{0.75mm}{
\begin{tabular}{c|llllllll|c}
\hline\noalign{}
\multicolumn{10}{c}{Interview Model}\\
\hline
$|L|$   & \multicolumn{1}{c}{$16$} &\multicolumn{1}{c}{$18$} & \multicolumn{1}{c}{$20$} & \multicolumn{1}{c}{$22$} & \multicolumn{1}{c}{$24$}&
\multicolumn{1}{c}{$26$} & \multicolumn{1}{c}{$28$}&
\multicolumn{1}{c}{$30$} & Avg.R.\\
\hline
Additive &63.61$\pm$2.58$\bullet$	&62.80$\pm$2.00$\bullet$	&60.81$\pm$2.62$\bullet$	&61.75$\pm$3.22$\bullet$	&60.39$\pm$2.63$\bullet$	&61.17$\pm$2.57$\bullet$	&58.91$\pm$1.77$\bullet$	&58.87$\pm$1.97$\bullet$ &7\\

Greedy  &70.97$\pm$2.31$\bullet$ &69.59$\pm$2.26$\bullet$ &68.28$\pm$2.03$\bullet$ &66.82$\pm$1.97$\bullet$ &65.03$\pm$1.37$\bullet$ &64.70$\pm$1.40$\bullet$ &63.45$\pm$1.49$\bullet$	&61.88$\pm$1.59$\bullet$ &6\\

MR-NSGA-II-2r &71.61$\pm$2.08$\bullet$  &70.13$\pm$2.29$\bullet$ &68.82$\pm$1.99$\bullet$ &67.56$\pm$1.88$\bullet$ &65.54$\pm$1.44$\bullet$ &65.18$\pm$1.65$\bullet$ &63.98$\pm$1.37$\bullet$ &62.65$\pm$1.54$\bullet$ &5\\

MR-NSGA-II-100 	&73.00$\pm$2.38$\bullet$	&71.25$\pm$2.35$\bullet$	&70.23$\pm$2.23$\bullet$	&68.84$\pm$2.08$\bullet$	&66.84$\pm$1.24$\bullet$	&66.90$\pm$1.62$\bullet$	&65.62$\pm$1.57$\bullet$	&64.01$\pm$1.39$\bullet$ &4\\

MR-MOEA/D 		&74.51$\pm$2.35$\bullet$	&72.84$\pm$2.38$\bullet$	&71.76$\pm$1.95$\bullet$	&70.26$\pm$2.15$\bullet$	&68.60$\pm$1.44$\bullet$	&68.23$\pm$1.62$\bullet$	&67.61$\pm$1.62$\bullet$	&65.31$\pm$1.30$\bullet$ &2\\

MR-GSEMO 	&73.87$\pm$2.36$\bullet$	&72.38$\pm$2.33$\bullet$	&71.05$\pm$2.07$\bullet$	&69.70$\pm$2.17$\bullet$	&67.91$\pm$1.60$\bullet$	&67.74$\pm$1.53$\bullet$	&66.47$\pm$1.47$\bullet$	&64.84$\pm$1.32$\bullet$ &3\\

MR-GSEMO-SR	&\bf{75.83$\pm$2.45}	&\bf{74.17$\pm$2.49}	&\bf{72.96$\pm$2.01}	&\bf{71.40$\pm$2.12}	&\bf{69.54$\pm$1.65}	&\bf{69.36$\pm$1.63}	&\bf{68.07$\pm$1.64}	&\bf{66.25$\pm$1.39} &1\\

\hline\noalign{}
\multicolumn{10}{c}{Coordination Model}\\
\hline
Additive 	&66.96$\pm$2.99$\bullet$	&65.57$\pm$2.48$\bullet$	&64.03$\pm$2.74$\bullet$	&63.97$\pm$2.93$\bullet$	&63.96$\pm$1.93$\bullet$	&63.12$\pm$1.83$\bullet$	&61.86$\pm$2.10$\bullet$	&61.32$\pm$2.24$\bullet$  &7\\

Greedy 	&75.40$\pm$2.37$\bullet$	&73.94$\pm$2.37$\bullet$	&72.35$\pm$2.16$\bullet$	&70.57$\pm$2.24$\bullet$	&68.75$\pm$1.36$\bullet$	&68.32$\pm$1.51$\bullet$	&66.96$\pm$1.51$\bullet$	&65.21$\pm$1.77$\bullet$ &6\\

MR-NSGA-II-2r	&75.96$\pm$2.28$\bullet$ &74.49$\pm$2.06$\bullet$ &73.34$\pm$2.04$\bullet$ &71.58$\pm$2.16$\bullet$ &69.55$\pm$1.36$\bullet$ &69.04$\pm$1.54$\bullet$ &67.65$\pm$1.52$\bullet$ &65.99$\pm$1.99$\bullet$ &5\\

MR-NSGA-II-100	&77.16$\pm$2.21$\bullet$	&75.57$\pm$2.28$\bullet$	&74.13$\pm$1.98$\bullet$	&72.52$\pm$2.08$\bullet$	&70.62$\pm$1.51$\bullet$	&70.07$\pm$1.35$\bullet$	&68.69$\pm$1.75$\bullet$	&67.07$\pm$1.45$\bullet$ &4\\

MR-MOEA/D &79.52$\pm$2.46$\bullet$	&77.54$\pm$2.61$\bullet$	&75.98$\pm$2.38$\bullet$	&74.19$\pm$2.24$\bullet$	&72.16$\pm$1.75$\bullet$	&71.69$\pm$1.73$\bullet$	&70.62$\pm$1.81$\bullet$	&68.68$\pm$1.58$\bullet$ &2\\

MR-GSEMO	&79.20$\pm$2.33$\bullet$	&77.23$\pm$2.56$\bullet$	&75.75$\pm$2.25$\bullet$	&73.90$\pm$2.38$\bullet$	&71.79$\pm$1.89$\bullet$	&71.09$\pm$1.52$\bullet$	&69.69$\pm$1.87$\bullet$	&67.86$\pm$1.49$\bullet$ &3\\

MR-GSEMO-SR	&\bf{81.19$\pm$2.80} &\bf{79.15$\pm$2.78}	&\bf{77.60$\pm$2.38}	&\bf{75.70$\pm$2.45}	&\bf{73.55$\pm$1.96}	&\bf{73.10$\pm$1.76}	&\bf{71.60$\pm$1.89}	&\bf{69.47$\pm$1.60} &1\\

\hline
\end{tabular}}
\end{center}
\end{table*}

\begin{table*}[t!]\caption{Expected number of employed migrants (mean$\pm$std) obtained by the algorithms when the number of jobs $|J|\in\{60,70,80,90\}\cup\{110,120,130,140\}$. For each $|J|$, the largest number is bolded, and `$\bullet$' denotes that MR-GSEMO-SR is significantly better than the corresponding algorithm by the Wilcoxon signed-rank test with confidence level $0.05$. Avg.R. denotes the average rank (the smaller, the better) of each algorithm under each setting as in~\cite{DBLP:journals/jmlr/Demsar06}.}\label{tab_J}
\scriptsize
\begin{center}
\setlength{\tabcolsep}{0.75mm}{
\begin{tabular}{c|llllllll|c}
\hline\noalign{}
\multicolumn{10}{c}{Interview Model}\\
\hline
$|J|$  & \multicolumn{1}{c}{$60$ } & \multicolumn{1}{c}{$70$} & \multicolumn{1}{c}{$80$} &\multicolumn{1}{c}{$90$}  &\multicolumn{1}{c}{$110$ }&\multicolumn{1}{c}{$120$} &\multicolumn{1}{c}{$130$} &\multicolumn{1}{c}{$140$} &Avg.R.\\
\hline
Additive &36.75$\pm$4.24$\bullet$	&41.81$\pm$2.47$\bullet$	&50.25$\pm$2.61$\bullet$	&56.46$\pm$3.64$\bullet$		&66.15$\pm$2.47$\bullet$	&70.95$\pm$3.05$\bullet$	&73.46$\pm$2.44$\bullet$	&75.28$\pm$3.23$\bullet$ &7\\

Greedy &47.98$\pm$1.22$\bullet$	&56.69$\pm$1.74$\bullet$	&64.88$\pm$1.82$\bullet$
&70.25$\pm$1.75$\bullet$	&76.96$\pm$1.60$\bullet$	&78.36$\pm$2.46$\bullet$	&79.81$\pm$2.16$\bullet$	&80.42$\pm$1.61$\bullet$ &6\\

MR-NSGA-II-2r &48.77$\pm$1.07$\bullet$ &57.30$\pm$1.65$\bullet$ &65.21$\pm$1.61$\bullet$ &70.55$\pm$1.81$\bullet$ &77.19$\pm$1.75$\bullet$ &78.60$\pm$2.46$\bullet$ &80.74$\pm$1.86$\bullet$ &80.55$\pm$1.80$\bullet$ &5\\

MR-NSGA-II-100 &49.18$\pm$1.21$\bullet$	&57.95$\pm$1.40$\bullet$	&66.12$\pm$1.67$\bullet$	&71.39$\pm$1.58$\bullet$		&77.94$\pm$1.59$\bullet$	&79.99$\pm$2.21$\bullet$	&81.05$\pm$1.72$\bullet$	&81.70$\pm$1.45$\bullet$ &4\\

MR-MOEA/D &49.42$\pm$1.17$\bullet$	&58.34$\pm$1.35$\bullet$	&66.78$\pm$1.46$\bullet$	&72.51$\pm$1.63$\bullet$		&79.18$\pm$1.51$\bullet$	&81.35$\pm$2.28$\bullet$	&82.59$\pm$1.93$\bullet$	&83.43$\pm$1.47$\bullet$ &2\\

MR-GSEMO &49.32$\pm$1.16$\bullet$	&58.25$\pm$1.43$\bullet$	&66.70$\pm$1.55$\bullet$	&72.11$\pm$1.74$\bullet$		&78.70$\pm$1.58$\bullet$	&80.63$\pm$2.43$\bullet$	&82.00$\pm$1.70$\bullet$	&82.43$\pm$1.58$\bullet$ &3\\

MR-GSEMO-SR &\bf{50.03$\pm$1.19} 	&\bf{59.40$\pm$1.43} 	&\bf{68.23$\pm$1.42} 	&\bf{73.84$\pm$1.65} 		&\bf{81.00$\pm$1.58} 	&\bf{83.14$\pm$2.09} 	&\bf{84.33$\pm$1.80} 	&\bf{85.46$\pm$1.73} &1\\
\hline\noalign{}
\multicolumn{10}{c}{Coordination Model}\\
\hline
Additive &37.99$\pm$4.17$\bullet$	&43.41$\pm$2.43$\bullet$	&52.09$\pm$3.91$\bullet$	&58.80$\pm$4.01$\bullet$		&69.50$\pm$2.80$\bullet$	&74.13$\pm$4.12$\bullet$	&77.10$\pm$2.69$\bullet$	&79.10$\pm$3.64$\bullet$ &7\\

Greedy &50.38$\pm$1.76$\bullet$	&59.29$\pm$2.13$\bullet$	&67.99$\pm$1.99$\bullet$	&74.25$\pm$1.65$\bullet$		&79.85$\pm$1.61$\bullet$	&81.38$\pm$2.70$\bullet$	&81.94$\pm$2.59$\bullet$	&81.84$\pm$2.30$\bullet$ &6\\

MR-NSGA-II-2r &51.51$\pm$1.90$\bullet$ &60.13$\pm$2.26$\bullet$ &68.75$\pm$2.11$\bullet$ &74.85$\pm$1.75$\bullet$ &81.13$\pm$1.48$\bullet$ &82.27$\pm$2.70$\bullet$ &82.82$\pm$2.03$\bullet$ &83.05$\pm$1.76$\bullet$ &5\\

MR-NSGA-II-100 &51.95$\pm$1.63$\bullet$	&60.86$\pm$1.76$\bullet$	&69.45$\pm$1.96$\bullet$	&75.43$\pm$1.93$\bullet$		&81.65$\pm$1.66$\bullet$	&83.37$\pm$2.35$\bullet$	&84.31$\pm$1.87$\bullet$	&84.31$\pm$1.79$\bullet$ &4\\

MR-MOEA/D &52.71$\pm$1.36$\bullet$	&62.08$\pm$1.73$\bullet$	&71.20$\pm$1.75$\bullet$	&77.87$\pm$2.04$\bullet$		&84.56$\pm$1.76$\bullet$	&86.38$\pm$2.30$\bullet$	&87.21$\pm$2.25$\bullet$	&88.30$\pm$1.71$\bullet$ &2\\

MR-GSEMO &52.27$\pm$1.49$\bullet$	&61.58$\pm$1.59$\bullet$	&70.49$\pm$1.88$\bullet$	&77.31$\pm$1.73$\bullet$		&83.68$\pm$1.63$\bullet$	&85.05$\pm$2.50$\bullet$	&86.00$\pm$2.09$\bullet$	&86.88$\pm$1.75$\bullet$ &3\\

MR-GSEMO-SR &\bf{53.29$\pm$1.48}  	&\bf{62.83$\pm$1.85} 	&\bf{72.33$\pm$1.78} 	&\bf{79.33$\pm$1.89} 		&\bf{86.23$\pm$1.71} 	&\bf{88.11$\pm$2.19} 	&\bf{88.70$\pm$1.97} 	&\bf{89.60$\pm$1.71} &1\\
\hline
\end{tabular}}
\end{center}
\end{table*}

\begin{table*}[t!]\caption{Expected number of employed migrants (mean$\pm$std) obtained by the algorithms when the number of professions $|\Pi|\in\{5,10,\ldots,30\}$. For each $|\Pi|$, the largest number is bolded, and `$\bullet$' denotes that MR-GSEMO-SR is significantly better than the corresponding algorithm by the Wilcoxon signed-rank test with confidence level $0.05$. Avg.R. denotes the average rank (the smaller, the better) of each algorithm under each setting as in~\cite{DBLP:journals/jmlr/Demsar06}.}\label{tab_Pi}
\scriptsize
\begin{center}
\setlength{\tabcolsep}{0.75mm}{
\begin{tabular}{c|llllll|c}
\hline\noalign{}
\multicolumn{8}{c}{Interview Model}\\
\hline
$|\Pi|$  & \multicolumn{1}{c}{$5$} & \multicolumn{1}{c}{$10$} &\multicolumn{1}{c}{$15$ }&\multicolumn{1}{c}{$20$} &\multicolumn{1}{c}{$25$} &\multicolumn{1}{c}{$30$} &Avg.R.\\
\hline
Additive 	&49.99$\pm$3.94$\bullet$		&42.81$\pm$2.74$\bullet$	&42.82$\pm$2.65$\bullet$	&42.32$\pm$2.10$\bullet$	&43.17$\pm$2.68$\bullet$	&43.75$\pm$1.84$\bullet$ &7\\

Greedy 	&65.97$\pm$2.24$\bullet$		&58.59$\pm$1.74$\bullet$	&55.71$\pm$1.30$\bullet$	&55.39$\pm$2.17$\bullet$	&54.48$\pm$2.10$\bullet$	&52.58$\pm$1.46$\bullet$ &6\\

MR-NSGA-II-2r &66.70$\pm$2.01$\bullet$ &59.26$\pm$1.93$\bullet$ &56.18$\pm$1.39$\bullet$ &55.76$\pm$2.23$\bullet$ &54.82$\pm$2.11$\bullet$ &52.78$\pm$1.46$\bullet$ &5\\

MR-NSGA-II-100 &67.55$\pm$2.02$\bullet$	&60.37$\pm$1.84$\bullet$	&56.95$\pm$1.47$\bullet$	&56.42$\pm$2.16$\bullet$	&55.20$\pm$1.98$\bullet$	&53.14$\pm$1.50$\bullet$ &4\\

MR-MOEA/D 	&68.69$\pm$2.14$\bullet$		&61.14$\pm$1.96$\bullet$	&57.59$\pm$1.66$\bullet$	&56.82$\pm$2.06$\bullet$	&55.67$\pm$2.11$\bullet$	&53.16$\pm$1.47$\bullet$ &2.1\\

MR-GSEMO 	&68.50$\pm$2.33$\bullet$		&60.98$\pm$1.93$\bullet$	&57.55$\pm$1.54$\bullet$	&56.64$\pm$2.05$\bullet$	&55.59$\pm$2.09$\bullet$	&53.31$\pm$1.48$\bullet$ &2.9\\

MR-GSEMO-SR 	&\bf{69.92$\pm$2.09} 	&\bf{62.02$\pm$1.87} 	&\bf{58.29$\pm$1.64} 	&\bf{57.34$\pm$2.01} 	&\bf{56.01$\pm$2.14} 	&\bf{53.58$\pm$1.48} &1\\
\hline\noalign{}
\multicolumn{8}{c}{Coordination Model}\\
\hline
Additive &50.77$\pm$4.27$\bullet$		&44.47$\pm$2.28$\bullet$	&43.85$\pm$2.73$\bullet$	&42.93$\pm$2.33$\bullet$	&42.89$\pm$2.89$\bullet$	&44.15$\pm$1.73$\bullet$ &7\\

Greedy &69.80$\pm$2.50$\bullet$	&61.28$\pm$1.68$\bullet$	&57.65$\pm$1.41$\bullet$	&56.84$\pm$2.22$\bullet$	&55.84$\pm$2.08$\bullet$	&53.37$\pm$1.44$\bullet$ &6\\

MR-NSGA-II-2r &70.67$\pm$2.45$\bullet$ &61.93$\pm$1.77$\bullet$  &58.08$\pm$1.52$\bullet$  &57.08$\pm$2.03$\bullet$  &55.87$\pm$2.09$\bullet$  &53.46$\pm$1.48$\bullet$ &5\\

MR-NSGA-II-100 &71.60$\pm$2.12$\bullet$	&62.53$\pm$1.74$\bullet$	&58.44$\pm$1.48$\bullet$	&57.43$\pm$2.16$\bullet$	&56.18$\pm$2.16$\bullet$	&53.54$\pm$1.58$\bullet$ &4\\

MR-MOEA/D 	&72.38$\pm$2.39$\bullet$		&63.16$\pm$2.00$\bullet$	&58.89$\pm$1.73$\bullet$	&57.73$\pm$2.15$\bullet$	&56.46$\pm$2.18$\bullet$	&53.77$\pm$1.58$\bullet$ &2.3\\

MR-GSEMO 	&72.27$\pm$2.53$\bullet$		&63.15$\pm$2.04$\bullet$	&59.01$\pm$1.56$\bullet$	&57.73$\pm$2.19$\bullet$	&56.45$\pm$2.24$\bullet$	&53.83$\pm$1.54$\bullet$ &2.7\\

MR-GSEMO-SR &\bf{73.75$\pm$2.44} 	&\bf{64.13$\pm$2.02} 	&\bf{59.61$\pm$1.80} 	&\bf{58.24$\pm$2.05} 	&\bf{56.76$\pm$2.22} 	&\bf{54.04$\pm$1.64} &1\\
\hline
\end{tabular}}
\end{center}
\end{table*}

\begin{figure*}[ht]\centering
\begin{minipage}[c]{0.24\linewidth}\centering
        \includegraphics[width=1\linewidth]{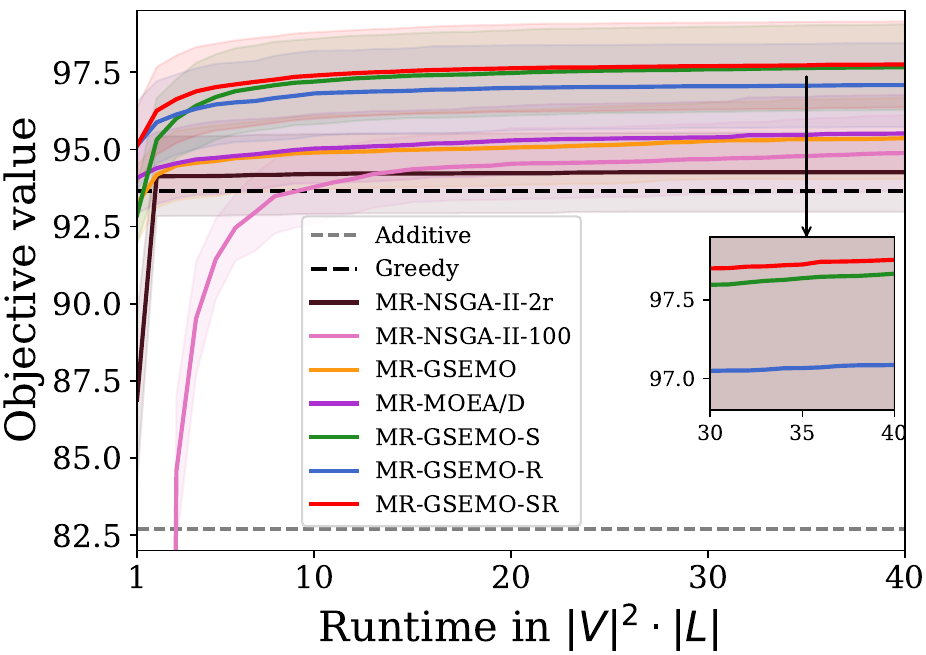}
\end{minipage}
\begin{minipage}[c]{0.24\linewidth}\centering
        \includegraphics[width=1\linewidth]{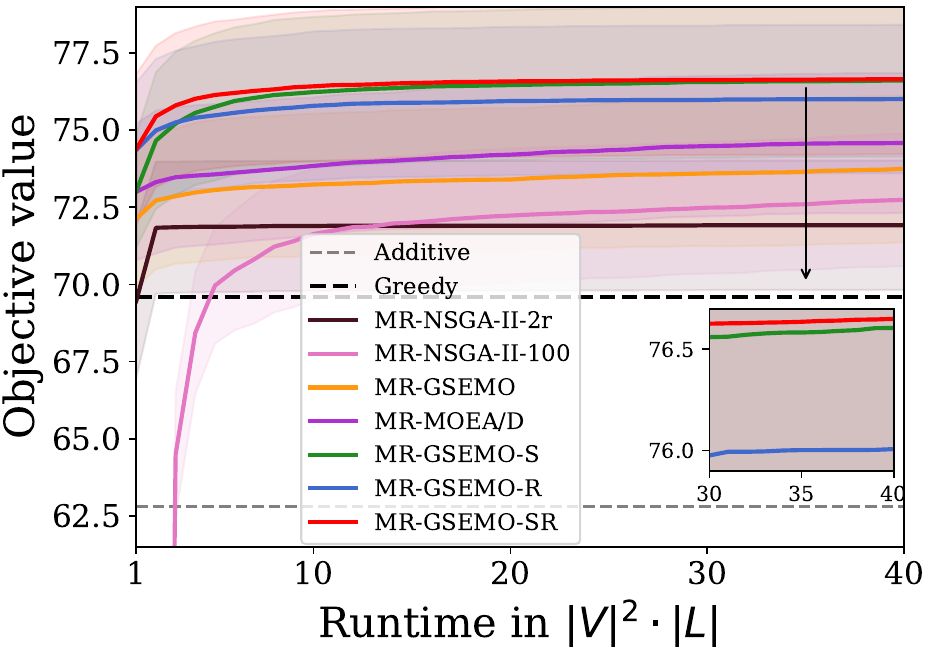}
\end{minipage}
\begin{minipage}[c]{0.24\linewidth}\centering
        \includegraphics[width=1\linewidth]{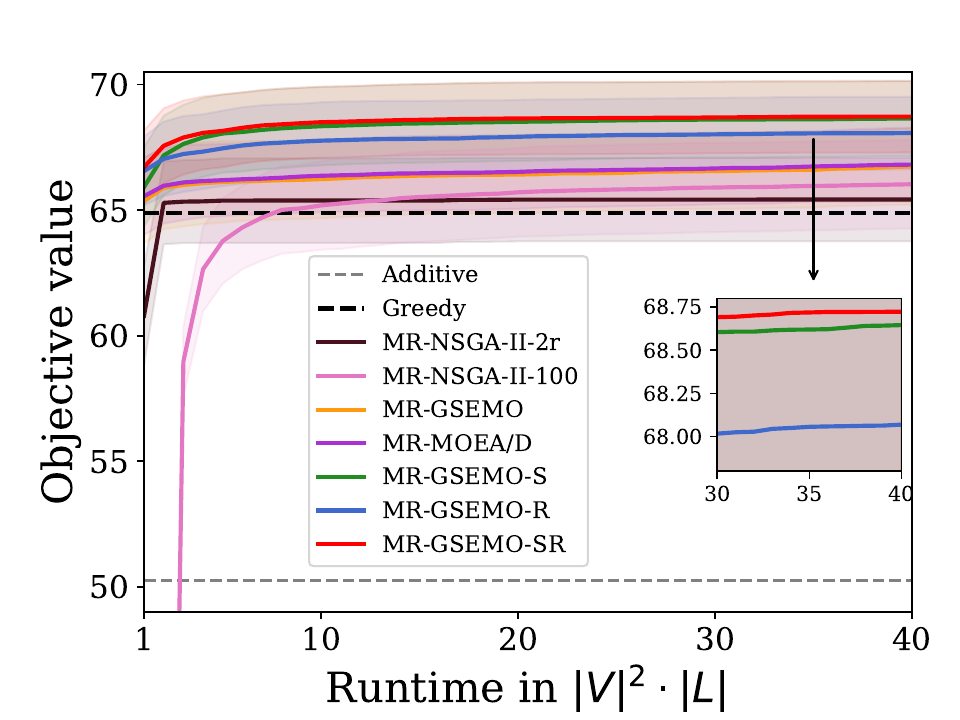}
\end{minipage}
\begin{minipage}[c]{0.24\linewidth}\centering
        \includegraphics[width=1\linewidth]{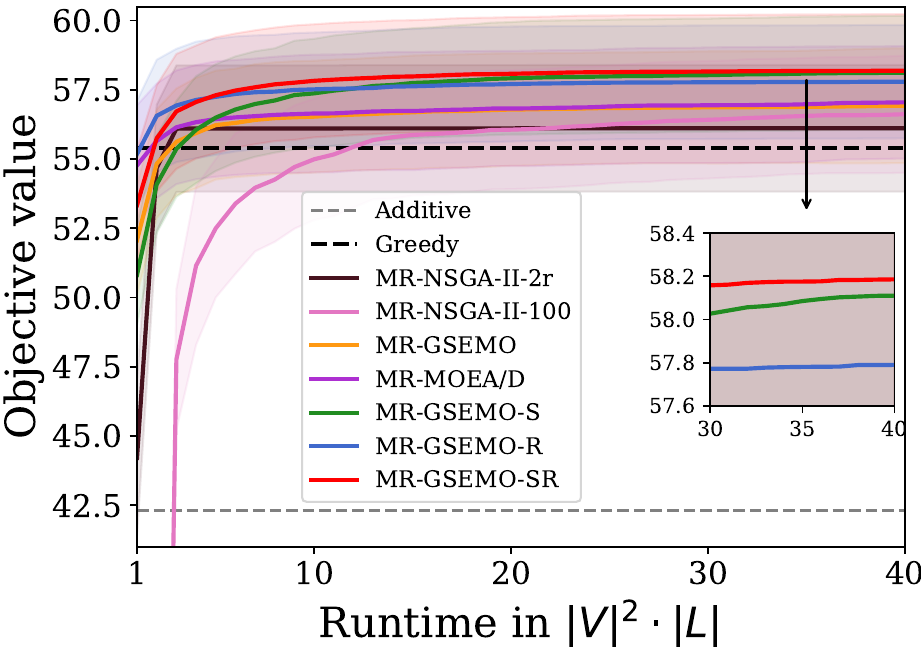}
\end{minipage}\\\vspace{0.6em}
\begin{minipage}[c]{0.24\linewidth}\centering
    \small (a) $|V|=120$ \\Interview Model
\end{minipage}
\begin{minipage}[c]{0.24\linewidth}\centering
    \small(b) $|L|=16$ \\Interview Model
\end{minipage}
\begin{minipage}[c]{0.24\linewidth}\centering
    \small(c) $|J|=80$ \\Interview Model
\end{minipage}
\begin{minipage}[c]{0.24\linewidth}\centering
    \small(d) $|\Pi|=20$ \\Interview Model
\end{minipage}\\\vspace{0.6em}
\begin{minipage}[c]{0.24\linewidth}\centering
        \includegraphics[width=1\linewidth]{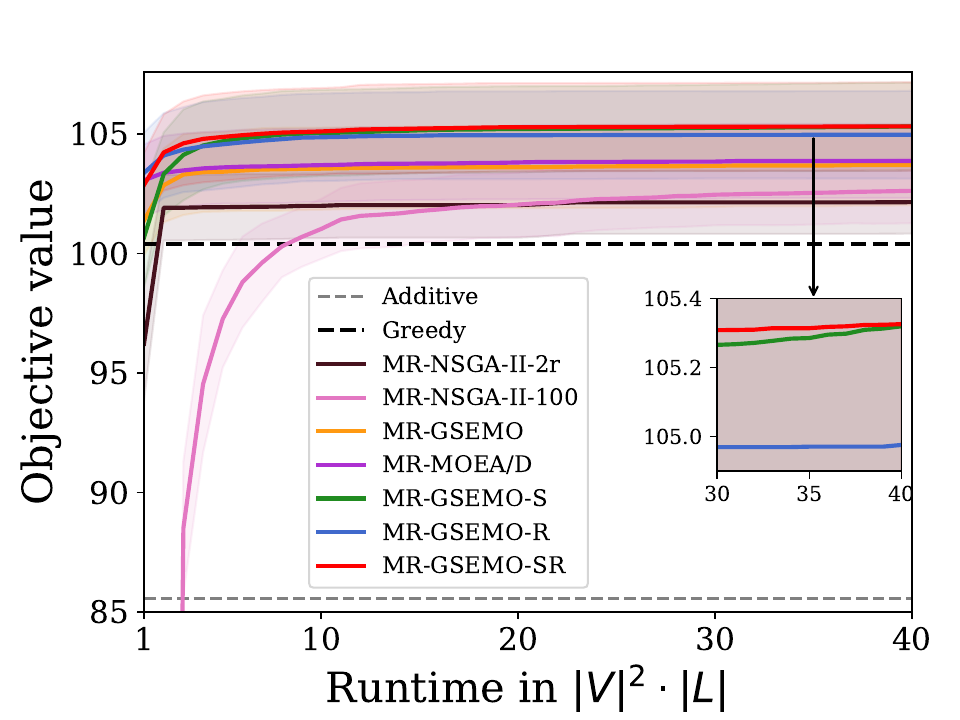}
\end{minipage}
\begin{minipage}[c]{0.24\linewidth}\centering
        \includegraphics[width=1\linewidth]{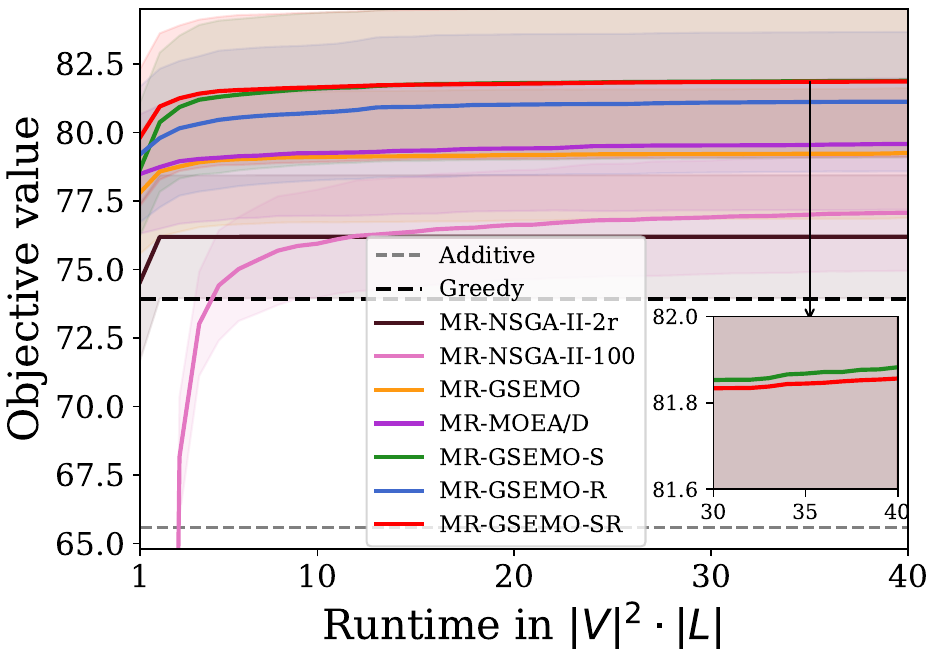}
\end{minipage}
\begin{minipage}[c]{0.24\linewidth}\centering
        \includegraphics[width=1\linewidth]{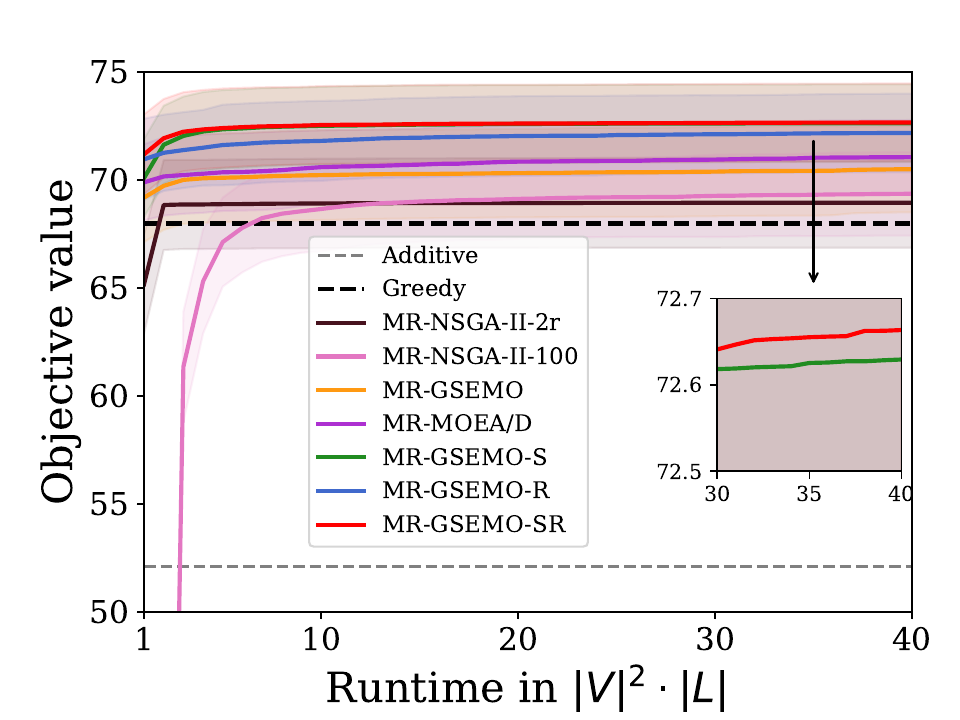}
\end{minipage}
\begin{minipage}[c]{0.24\linewidth}\centering
        \includegraphics[width=1\linewidth]{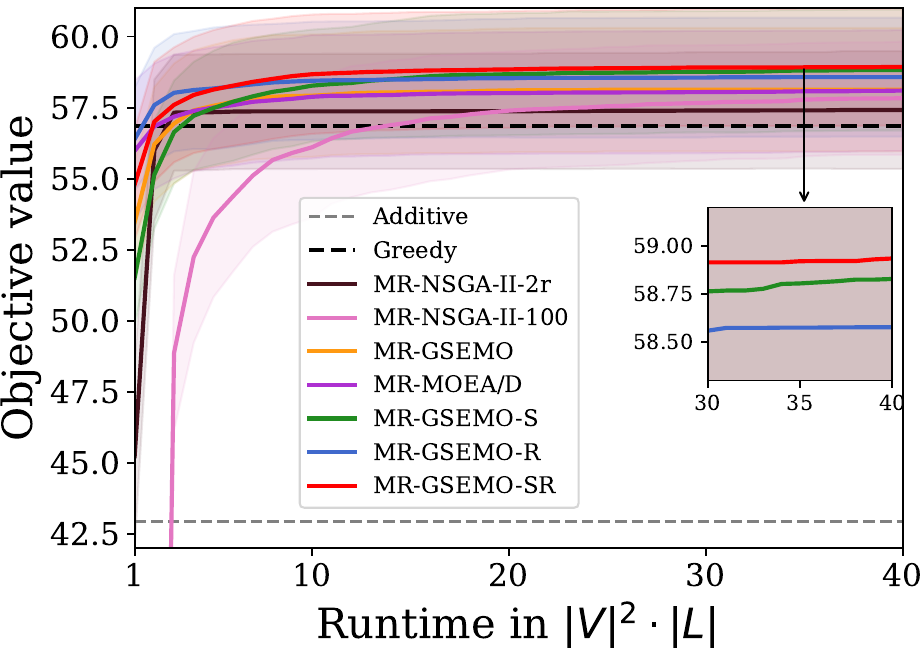}
\end{minipage}\\\vspace{0.6em}
\begin{minipage}[c]{0.24\linewidth}\centering
    \small(e) $|V|=120$ \\Coordination Model
\end{minipage}
\begin{minipage}[c]{0.24\linewidth}\centering
    \small(f) $|L|=16$ \\Coordination Model
\end{minipage}
\begin{minipage}[c]{0.24\linewidth}\centering
    \small(g) $|J|=80$ \\Coordination Model
\end{minipage}
\begin{minipage}[c]{0.24\linewidth}\centering
    \small(h) $|\Pi|=20$ \\Coordination Model
\end{minipage}
\caption{One each migration model, the objective value vs. runtime (i.e., number of objective evaluations) with $|V|=120$, $|L|=16$, $|J|=80$ and $|\Pi|=20$, respectively.}\label{fig_time}
\end{figure*}

Among the five variants of MR-EMO, MR-NSGA-II-2r performs the worst, which may be because the population of NSGA-II-2r may contain redundant dominated solutions, leading to the bad performance; MR-NSGA-II-100 slightly outperforms MR-NSGA-II-2r, which may be because MR-NSGA-II-100, with a smaller population size, can more easily select superior parents. This ease of selection facilitates the generation of higher-quality offspring, which, in turn, enhances the overall results within the same number of generations. However, MR-NSGA-II-2r, MR-NSGA-II-100 and MR-GSEMO all perform worse than MR-MOEA/D. This improvement may be attributed to MR-MOEA/D's neighborhood-based search strategy, which enhances the local search effectiveness within each subproblem's area, thereby boosting both search efficiency and solution quality. MR-GSEMO-SR always achieves the best average objective value (i.e., the largest expected number of employed migrants), and is significantly better than all the other algorithms by the Wilcoxon signed-rank test~\cite{wilcoxon1945individual} with confidence level $0.05$. These observations show the effectiveness of the introduced matrix-swap mutation and repair mechanism for the migrant resettlement problem. 

Tables~\ref{tab_V} to~\ref{tab_Pi} also show that the expected number of employed migrants achieved by each algorithm increases with the number $|V|$ of migrants and the number $|J|$ of jobs, but decreases with the number $|L|$ of localities and the number $|\Pi|$ of professions. More migrants and jobs will bring more employment naturally, while more localities and professions with a fixed number of migrants and the number of jobs will increase the difficulty of resettlement and thus lead to less employment.

\textbf{Ablation Study.} Compared with MR-GSEMO using bit-wise mutation only, MR-GSEMO-SR employs matrix-swap mutation and repair mechanism additionally. To examine the utility of these two introduced components more clearly, we run another two variants MR-GSEMO-S and MR-GSEMO-R, which apply only matrix-swap mutation and repair mechanism, respectively. We plot the curve of objective value over runtime under two migration models with $|V|=120$, $|L|=16$, $|J|=80$ and $|\Pi|=20$, respectively, as shown in Figure~\ref{fig_time}. The additive and greedy algorithms are fixed-time algorithms, while the variants of MR-EMO are anytime algorithms, and can get better performance by using more time. All variants of MR-EMO (i.e., MR-NSGA-II-2r, MR-NSGA-II-100, MR-MOEA/D, MR-GSEMO, MR-GSEMO-SR, MR-GSEMO-S and MR-GSEMO-R) surpass the additive and greedy algorithms within $5\cdot|V|^2\cdot|L|$ time. We can observe that even applying only one component (i.e., matrix-swap mutation or repair mechanism), both MR-GSEMO-S and MR-GSEMO-R are still better than MR-NSGA-II-2r, MR-NSGA-II-100, MR-MOEA/D and MR-GSEMO, showing the effectiveness of matrix-swap mutation and repair mechanism clearly. Compared with one-point crossover and bit-wise mutation, the matrix-swap mutation operator has a larger probability of generating feasible solutions, and thus can make feasible regions explored efficiently. Instead of discarding infeasible solutions directly, repairing them may lead to good feasible solutions. Meanwhile, MR-GSEMO-S is better than MR-GSEMO-R, disclosing the more important role of the matrix-swap mutation operator. Finally, MR-GSEMO-SR, which employs both matrix-swap mutation and repair mechanism, performs the best, implying that using the two components together can further improve the performance.

\section{Conclusion}\label{conclusion}

In this paper, we propose the general framework MR-EMO for the important migrant resettlement problem, which tries to maximize the expected number of employed migrants while dispatching as few migrants as possible. MR-EMO can be equipped with any MOEA to solve the reformulated bi-objective problem, and we apply NSGA-II, MOEA/D and GSEMO, and also propose the specific MOEA, GSEMO-SR, by introducing matrix-swap mutation and repair mechanism into GSEMO. We prove that using either GSEMO or GSEMO-SR, MR-EMO can achieve the approximation guarantee of $1/(k+\frac{1}{p}+\frac{2\epsilon r}{1-\epsilon})$, surpassing the best-known one~\cite{golz2019migration}, obtained by the greedy algorithm. It is more challenging to analyze the theoretical guarantees of MR-NSGA-II and MR-MOEA/D. Future work could focus on establishing these guarantees, which might reveal unique strengths or weaknesses that could guide their application and further development. This would be an interesting area for exploration.

By experiments under the two migration models, i.e., interview and coordination models, we show that MR-EMO using any of the four MOEAs can produce better employment than the previous (i.e., additive and greedy) algorithms, and using GSEMO-SR leads to the best performance, which is significantly better than all the other algorithms. The matrix-swap mutation operator and repair mechanism can also be incorporated into other MOEAs besides GSEMO. Thus, it is interesting to examine whether they can still bring performance improvement. In the future, exploring other advanced MOEAs or designing improved versions within the MR-EMO framework could lead to the development of even more robust algorithms for solving the migration resettlement problem.

The migrant resettlement problem studied in this paper aims to settle migrants in suitable localities, where there are a number of jobs available to migrants. The resettlement plan considers the matching between migrants and available local jobs, and the goal is to maximize the expected number of employed migrants. What we want to emphasize is that when applying the proposed framework MR-EMO to real-world data, it is worth generating more accurate matching probabilities during data preparation, by considering migrants’ own technology, migrant wishes, local policies that support migrants, etc. Practical application of the MR-EMO framework necessitates collaboration with migration agencies and governments to access accurate and comprehensive data. It is important to clarify that the resettlement process does not involve ethical or moral issues, such as racial or religious discrimination.

\bibliographystyle{abbrvnat}
\bibliography{migration}
\end{document}